\theoremstyle{plain}
\newtheorem{theo}{Theorem}
\newtheorem{prop}{Proposition}
\newtheorem{lemm}{Lemma}
\newtheorem{coro}{Corollary}
\newtheorem{assump}{Assumption}
\theoremstyle{definition}
\newtheorem{prob}{Problem}
\theoremstyle{definition}
\newtheorem{remark}{Remark}
\def\0{\bm{0}}
\def\1{\bm{1}}
\def\2{\bm{2}}
\def\3{\bm{3}}
\def\4{\bm{4}}
\def\5{\bm{5}}
\def\6{\bm{6}}
\def\7{\bm{7}}
\def\8{\bm{8}}
\def\9{\bm{9}}
\def\a{\bm{a}}
\def\b{\bm{b}}
\def\e{\bm{e}}
\def\f{\bm{f}}
\def\g{\bm{g}}
\def\k{\bm{k}}
\def\n{\bm{n}}
\def\p{\bm{p}}
\def\r{\bm{r}}
\def\u{\bm{u}}
\def\w{\bm{w}}
\def\x{\bm{x}}
\def\z{\bm{z}}
\def\A{\bm{A}}
\def\B{\bm{B}}
\def\C{\bm{C}}
\def\D{\bm{D}}
\def\F{\bm{F}}
\def\G{\bm{G}}
\def\I{\bm{I}}
\def\K{\bm{K}}
\def\L{\bm{L}}
\def\M{\bm{M}}
\def\N{\bm{N}}
\def\P{\bm{P}}
\def\R{\bm{R}}
\def\U{\bm{U}}
\def\V{\bm{V}}
\def\W{\bm{W}}
\def\X{\bm{X}}
\def\CC{\mathcal{C}}
\def\EC{\mathcal{E}}
\def\FC{\mathcal{F}}
\def\GC{\mathcal{G}}
\def\IC{\mathcal{I}}
\def\JC{\mathcal{J}}
\def\SC{\mathcal{S}}
\def\Real{\mbox{$\mathbb{R}$}}
\def\Natural{\mbox{$\mathbb{N}$}}
\def\SymMat{\mbox{$\mathbb{S}$}}
\def\LS{\scriptsize \bm{L}}
\def\FS{\scriptsize \bm{F}}
\def\XS{\scriptsize \bm{X}}
\def\Pib{\mbox{\bm{$\Pi$}}}
\def\zeros{\mbox{\scriptsize $\0$}}
\def\widebar{\accentset{{\cc@style\underline{\mskip10mu}}}}
\def\Widebar{\accentset{{\cc@style\underline{\mskip8mu}}}}
\newcommand{\wb}{\widebar}
\newcommand{\wh}{\widehat}
\newcommand{\wt}{\widetilde}
\newcommand{\equivSym}{\Leftrightarrow}
\def\Primal{\mbox{$\mathbb{Q}$}}
\def\Dual{\mbox{$\mathbb{Q}_*$}}
\title{Ellipsoidal Rounding for Nonnegative Matrix Factorization  
Under Noisy Separability}
\author{Tomohiko Mizutani 
\thanks{Department of Information Systems Creation,
        Kanagawa University,
        3-27-1 Rokkakubashi, Kanagawa-ku,
        Yokohama, Kanagawa, 221-8686, Japan.
        {\tt mizutani@kanagawa-u.ac.jp}}}
\date{\today}
\begin{document}

\maketitle

\begin{abstract}
We present a numerical algorithm 
for nonnegative matrix factorization (NMF) problems under noisy separability.
An NMF problem under separability can be stated as one of 
finding all vertices of the convex hull of data points.
The research interest of this paper is 
to find the vectors as close to the vertices as possible
in a situation  in which noise is added to the data points.
Our algorithm is designed to capture the shape of the convex hull of
data points by using its enclosing ellipsoid.
We show that the algorithm has correctness and robustness properties
from theoretical and practical perspectives;
correctness here means that if the data points do not contain any noise, 
the algorithm can find the vertices of their convex hull;
robustness means that if the data points contain noise, 
the algorithm can find the near-vertices.
Finally, we apply the algorithm to document clustering, 
and report the experimental results. \bigskip \\ 
{\bf Keywords:}
Nonnegative matrix factorization, separability, 
robustness to noise, enclosing ellipsoid, document clustering.   
\end{abstract}

\section{Introduction}
\label{Sec: Intro}

This paper presents a numerical algorithm 
for  nonnegative matrix factorization (NMF) problems 
under noisy separability.
The problem can be regarded as a special case of an NMF problem.
Let $\Real^{d \times m}_+$ be the set of $d$-by-$m$ nonnegative matrices, 
and $\Natural$ be the set of nonnegative integer numbers.
A nonnegative matrix is a real matrix whose elements are all nonnegative.
For a given $\A \in \Real^{d \times m}_+$ and  $r \in \Natural$, 
the nonnegative matrix factorization (NMF) problem 
is to find $\F \in \Real^{d \times r}_+$ and  $\W \in \Real^{r \times m}_+$ such that
the product $\F\W$ is as close to $\A$ as possible.
The nonnegative matrices $\F$ and $\W$ give a factorization of $\A$ of the form,
\begin{equation*}
\A = \F\W + \N,
\end{equation*}
where $\N$ is a $d$-by-$m$  matrix.
This factorization is referred to as the NMF of $\A$.

Recent studies have shown that 
NMFs are useful for tackling various problems
such as facial image analysis \cite{Lee99}, 
topic modeling \cite{Aro12b, Aro13, Din13}, 
document clustering \cite{Xu03, Sha06}, 
hyperspectral unmixing \cite{Nas05, Mia07,Gil13},
and blind source separation \cite{Cic09}.
Many algorithms have been developed 
in the context of solving such practical applications.
However, there are some drawbacks in the use of NMFs for such applications.
One of them is in the hardness of solving an NMF problem.
In fact, the problem has been shown to be NP-hard  in \cite{Vav09}.

As a remedy for the hardness of the problem, 
Arora et al.\ \cite{Aro12a} proposed to exploit the notion of separability, 
which was originally introduced by 
Donoho and Stodden in \cite{Don03} for  the uniqueness of NMF.
An NMF problem under separability becomes a tractable one.
{\it Separability} assumes that $\A \in \Real^{d \times m}_+$ can be represented as
\begin{equation}
 \A = \F \W \ \mbox{for} \ \F \in \Real^{d \times r}_+ \ 
  \mbox{and} \ \W = (\I, \K) \Pib \in \Real^{r \times m}_+,
  \label{Eq: Separability}
\end{equation}
where $\I$ is an $r$-by-$r$ identity matrix, $\K$ is an $r$-by-$(m-r)$ nonnegative matrix, 
and $\Pib$ is an $m$-by-$m$ permutation matrix.
This means that each column of $\F$ corresponds to that of $\A$ up to a scaling factor.
A matrix $\A$ is said to be a {\it separable matrix} 
if it can be represented in the form (\ref{Eq: Separability}).
In this paper, we call $\F$ the {\it basis matrix} of a separable matrix, 
and $\W$, as well as its submatrix $\K$, the {\it weight matrix}.
{\it Noisy separability} assumes that a separable matrix $\A$ 
contains a noise matrix $\N$ such that $\wt{\A} = \A + \N$,
where $\N$ is a $d$-by-$m$  matrix.
Arora et al.\ showed  that there exists an algorithm for 
finding the near-basis matrix of a noisy separable one
if the noise is small in magnitude.
Although a separability assumption restricts the fields of application for NMFs,
it is known  \cite{Aro12a, Aro12b, Aro13, Kum13, Gil13} to be  reasonable 
at least, in the contexts of document clustering, topic modeling, 
and hyperspectral unmixing.
In particular, this assumption is widely used as a pure-pixel assumption 
in hyperspectral unmixing (See, for instance, \cite{Nas05, Mia07, Gil13}).

An NMF problem under noisy separability
is to seek for the basis matrix of a noisy separable one.
The problem is formally described as follows:
\begin{prob}
 Let a data matrix $\M$ be a noisy separable matrix of size $d$-by-$m$.
 Find an index set $\IC$ with cardinality $r$ on $\{1, \ldots, m\}$
 such that $\M(\IC)$ is as close to the basis matrix $\F$ as possible.
 \label{Prob: SepNMF}
\end{prob}
Here, $\M(\IC)$ denotes a submatrix of $\M$ that
consists of every column vector with an index in $\IC$.
We call the column vector of $\M$ a {\it data point}
and that of the basis matrix $\F$ a {\it basis vector}.
An ideal algorithm for the problem should have correctness and robustness properties;
correctness here is that,
if the data matrix $\M$ is just a separable one,
the algorithm can find the basis matrix;
robustness is that,
if the data matrix $\M$ is a noisy separable one,
the algorithm can find the near-basis matrix.
A formal description of the properties is given in Section \ref{Subsec: Preliminaries}

We present a novel algorithm for Problem \ref{Prob: SepNMF}.
The main contribution of this paper is to show that 
the algorithm has correctness and robustness properties 
from theoretical and practical perspectives.
It is designed on the basis of the geometry of a separable matrix.
Under reasonable assumptions, 
the convex hull of the column vectors of a separable matrix
forms a simplex, and in particular, 
the basis vectors correspond to the vertices.
Therefore,
if all vertices of a simplex can be found, we can obtain the basis matrix
of the separable matrix.
Our algorithm uses the fact that the vertices of simplex can be found by an ellipsoid.
That is, if we draw the minimum-volume enclosing ellipsoid (MVEE) for a simplex,
the ellipsoid only touches its vertices.
More precisely, 
we give plus and minus signs to the vertices of a simplex,
and take the convex hull;
it becomes a crosspolytope having the simplex as one of the facets.
Then, the MVEE for the crosspolytope only touches the vertices of the simplex 
with plus and minus signs.

Consider Problem \ref{Prob: SepNMF} without noise.
In this case, the data matrix is just a separable one.
Our algorithm computes the MVEE for the data points
and outputs the points on the boundary of the ellipsoid.
Then, the obtained points correspond to the basis vectors 
for a separable matrix.
We show in Theorem \ref{Theo: Correctness} that the correctness property holds.
Moreover, the algorithm  works well even when the problem contains noise.
We show in Theorem \ref{Theo: Robustness} that, if the noise is lower than a certain level,
the algorithm correctly identifies the near-basis vectors for a noisy separable matrix, 
and hence, the robustness property holds.
The existing algorithms \cite{Aro12a, Bit12, Gil13a, Gil13b, Gil13, Kum13}
are formally shown to have these correctness and robustness properties.
In Section \ref{Subsec: Cmp},
our correctness and robustness properties are compared with those of the existing algorithms.

It is possible that 
noise will exceed the level that  Theorem \ref{Theo: Robustness} guarantees.
In such a situation, the MVEE for the data points may touch many points.
Hence, $r$ points need to be selected from the points on the boundary of the ellipsoid.
We make the selection by using  existing algorithms 
such as SPA \cite{Gil13} and XRAY \cite{Kum13}.
Our algorithm thus works as a preprocessor 
which filters out basis vector candidates from the data points 
and enhance the performance of existing algorithms.

We demonstrated the robustness of the algorithms to noise
through experiments with synthetic data sets.
In particular, we experimentally compared our algorithm with SPA and XRAY.
We synthetically generated data sets with various noise levels, 
and measured the robustness of an algorithm by its recovery rate.
The experimental results indicated that 
our algorithm can improve the recovery rates of SPA and XRAY.

Finally, we applied our algorithm to document clustering.
Separability for a document-term matrix means that 
each topic has an anchor word. 
An anchor word is a word which is contained in one topic 
but not contained in the other topics.
If an anchor word is found, 
it suggests the existence of its associated topic.
We conducted experiments with document corpora and 
compared the clustering performances of our algorithm and SPA.
The experimental results indicated that 
our algorithm would usually outperform SPA
and can extract more recognizable topics.

The rest of this paper is organized as follows.
Section \ref{Sec: Outline and Comparison} gives 
an outline of our algorithm and reviews related work. 
Then, the correctness and robustness properties of our algorithm
are given, and  a comparison with existing algorithms is described.
Section \ref{Sec: MVEE} reviews the formulation and algorithm of 
computing the MVEE for a set of points.
Sections \ref{Sec: Algorithm} and \ref{Sec: Implementation} 
are the main part of this paper.
We show the correctness and robustness properties of our algorithm
in Section \ref{Sec: Algorithm}, and discuss its practical implementation 
in Section \ref{Sec: Implementation}.
Section \ref{Sec: Experiment} reports the numerical experiments 
for the robustness of algorithms and document clustering.
Section \ref{Sec: Concluding} gives concluding remarks.

\subsection{Notation and Symbols} 
We use $\Real^{d \times m}$ to denote a set of real matrices of size $d$-by-$m$,
and $\Real^{d \times m}_+$ to denote a set of nonnegative matrices of $d$-by-$m$.
Let $\A \in \Real^{d \times m}$.
The symbols $\A^\top$ and $\mbox{rank}(\A)$ respectively denote 
the transposition and the rank.
The symbols $||\A||_p$ and $||\A||_F$ 
are the matrix $p$-norm and the Frobenius norm.
The symbol $\sigma_i(\A)$ is 
the $i$th largest singular value.
Let $\a_i$ be the $i$th column vector of $\A$, and 
$\IC$ be a subset of $\{1,\ldots,m\}$.
The symbol $\A(\IC)$ denotes a $d$-by-$|\IC|$ submatrix of $\A$ such that $(\a_i : i \in \IC)$.
The convex hull of all the column vectors of $\A$
is denoted by $\mbox{conv}(\A)$, and 
referred to as the convex hull of $\A$ for short.
We denote an identity matrix and a vector of all ones 
by $\I$ and $\e$, respectively.

We use $\SymMat^d$ to denote 
a set of real symmetric matrices of size $d$.
Let $\A \in \SymMat^d$.
If the matrix is positive definite, we represent it as $\A \succ \0$.
Let $\A_1 \in \SymMat^d$ and $\A_2 \in \SymMat^d$.
We denote by $\langle \A_1, \A_2 \rangle$
the Frobenius inner product of the two matrices 
which is given as the trace of matrix $\A_1\A_2$.

We use a MATLAB-like notation.
Let $\A_1 \in \Real^{d \times m_1}$ and 
$\A_2 \in \Real^{d \times m_2}$.
We denote by $(\A_1, \A_2)$ the horizontal concatenation 
of the two matrices, which is a $d$-by-$(m_1 + m_2)$ matrix.
Let $\A_1 \in \Real^{d_1 \times m}$ and $\A_2 \in \Real^{d_2 \times m}$.
We denote by $(\A_1; \A_2)$  the vertical concatenation 
of the two matrices, and it is a matrix of the form,
\begin{equation*}
 \left(
 \begin{array}{c}
  \A_1 \\
  \A_2
 \end{array}
\right) \in \Real^{(d_1 + d_2) \times m}.
\end{equation*}
Let $\A$ be a $d$-by-$m$ rectangular diagonal matrix having 
diagonal elements $a_{1}, \ldots, a_{t}$ where $t = \min\{d,m\}$.
We use $\mbox{diag}(a_1, \ldots, a_t)$ to denote  the matrix.

\section{Outline of Proposed Algorithm and Comparison with Existing Algorithms}
\label{Sec: Outline and Comparison}

Here, we formally describe the properties mentioned in Section \ref{Sec: Intro}
that an algorithm is expected to have, 
and also describe the assumptions we place on Problem \ref{Prob: SepNMF}.
Next, we give a geometric interpretation of a separable matrix 
under these assumptions, and then, outline the proposed algorithm.
After reviewing the related work,
we describe the correctness and robustness properties of our algorithm
and compare with those of the existing algorithms.

\subsection{Preliminaries}
\label{Subsec: Preliminaries}

Consider Problem \ref{Prob: SepNMF} whose data matrix $\M$ 
is a noisy separable one of the form $\A + \N$.
Here, $\A$ is a separable matrix of (\ref{Eq: Separability}) and $\N$ is a noise matrix.
We can rewrite it as
\begin{eqnarray}
 \M & = & \A + \N  \nonumber \\
    & = & \F(\I, \K)\Pib + \N   \nonumber \\
   & = & (\F + \N^{(1)}, \F\K + \N^{(2)}  )\Pib  
  \label{Eq: Rep of noisy separable matrix}
\end{eqnarray}
where 
$\N^{(1)}$ and $\N^{(2)}$ are $d$-by-$r$ and $d$-by-$\ell$ submatrices of $\N$ 
such that $\N \Pib^{-1} = (\N^{(1)}, \N^{(2)})$.
Hereinafter, we use the notation $\ell$ to denote $m-r$.
The goal of Problem \ref{Prob: SepNMF} is to identify an index set $\IC$ such that $\M(\IC) = \F + \N^{(1)}$.

As  mentioned in Section \ref{Sec: Intro}, 
it is ideal that 
an algorithm for Problem \ref{Prob: SepNMF} has correctness and robustness properties.
These properties are formally described as follows:
\begin{itemize}
 \item {\bf Correctness.} 
       If the data matrix $\M$ does not contain a noise matrix $\N$
       and is just a separable matrix,
       the algorithm returns an index set $\IC$ such that $\M(\IC) = \F$.
	    
 \item {\bf Robustness.}
       If the data matrix $\M$ contains a noise matrix $\N$
       and is a noisy separable matrix such that
       $||\N||_p < \epsilon$,
       the algorithm  returns an index set $\IC$ such that
       $||\M(\IC) -\F||_p < \tau \epsilon$ 
       for some constant real number $\tau$.
\end{itemize}
In particular, the robustness property has  $\tau = 1$,
if an algorithm can identify
an index set $\IC$ such that $\M(\IC) = \F + \N^{(1)}$
where $\F$ and $\N^{(1)}$ are of (\ref{Eq: Rep of noisy separable matrix})
since $||\M(\IC) -\F||_p = ||\N^{(1)}||_p < \epsilon$.

In the design of the algorithm, 
some assumptions are usually placed on a separable matrix.
Our algorithm uses Assumption \ref{Assump: Separability}.
\begin{assump}  \label{Assump: Separability}
 A separable matrix $\A$ of (\ref{Eq: Separability}) consists of 
 an basis matrix $\F$ and a weight matrix $\W$ satisfying the following conditions.
\begin{enumerate}[\ref{Assump: Separability}-a)]
 \item  Every column vector of weight matrix $\W$ has unit 1-norm.
 \item  The basis matrix $\F$ has full column rank.  
\end{enumerate}
\end{assump}
Assumption 1-a can be invoked without loss of generality.
If the $i$th column of $\W$ is zero, 
so is the $i$th column of $\A$.
Therefore, we can construct a smaller separable matrix having $\W$ with no zero column.
Also, since we have 
\begin{equation*}
 \A = \F \W \equivSym  \A \D = \F \W \D,
\end{equation*}
every column of $\W$ can have unit 1-norm.
Here, $\D$ denotes a diagonal matrix having 
the $(i, i)$th diagonal element $d_{ii} = 1 / ||\w_i||_1$.

The same assumption 
is used by the algorithm in \cite{Gil13}.
We may get the feeling that 1-b is strong.
The algorithms in \cite{Aro12a, Bit12, Gil13a, Gil13b, Kum13} 
instead assume {\it simpliciality}, 
wherein no column vector of $\F$ can be represented as a convex hull of 
the remaining vectors of $\F$.
Although 1-b is a stronger assumption, 
it still seems reasonable for Problem \ref{Prob: SepNMF} 
from the standpoint of practical application.
This is because, in such cases, 
it is less common for the column vectors of the basis matrix $\F$ to be linearly dependent.

\subsection{Outline of Proposed Algorithm}

Let us take a look at Problem \ref{Prob: SepNMF} from a geometric point of view.
For simplicity, consider the noiseless case first.
Here, a data matrix is just a separable matrix $\A$.
Separability implies that $\A$ has a factorization of the form (\ref{Eq: Separability}).
Under Assumption \ref{Assump: Separability},
$\mbox{conv}(\A)$ becomes an $(r-1)$-dimensional simplex in $\Real^d$.
The left part of Figure~\ref{Fig: geoNMF} visualizes a separable data matrix.
The white points are data points, and the black ones are basis vectors.
The key observation is that the basis vectors $\f_1, \ldots, \f_r$ of $\A$
correspond to the vertices of $\mbox{conv}(\A)$.
This is due to separability.
Therefore, if all vertices of $\mbox{conv}(\A)$ can be found,
we can obtain the basis matrix $\F$ of $\A$.
This is not hard task, 
and we can design an efficient algorithm for doing it.
But, if noise is added to a separable matrix, the task becomes hard.
Let us suppose that 
the data matrix of Problem \ref{Prob: SepNMF} 
is a noisy separable matrix $\wt{\A}$ of the form $\A + \N$.
The vertices of $\mbox{conv}(\wt{\A})$
do not necessarily match the basis vectors $\f_1, \ldots, \f_r$ of $\A$.
The right part of Figure~\ref{Fig: geoNMF} visualizes a noisy separable data matrix.
This is the main reason why it is hard to identify the basis matrix from noisy separable one.

\begin{figure}[h]
\begin{center}
 \includegraphics[width=0.75\columnwidth]{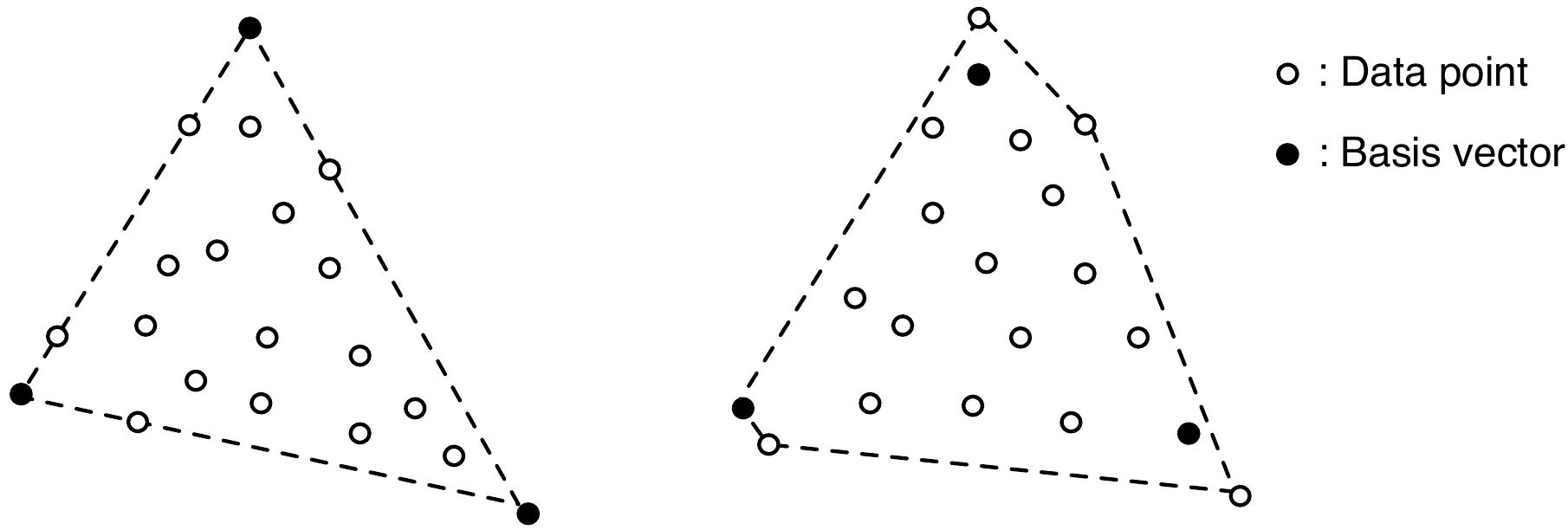}
 \caption{Convex hull of a separable data matrix with $r=3$ under Assumption \ref{Assump: Separability}.
 (Left) Noiseless case.  (Right) Noisy case.}
 \label{Fig: geoNMF}
\end{center}
\end{figure}

Our algorithm is designed on the basis of 
Proposition \ref{Prop: active points of simplex} in Section \ref{Subsec: Analysis};
it states that all vertices of a simplex can be found by using an ellipsoid.
We here describe the proposition from a geometric point of view.
Consider an $(r-1)$-dimensional simplex $\Delta$ in $\Real^{r}$.
Let $\g_1, \ldots, \g_r \in \Real^{r}$ be the vertices of $\Delta$, and 
$\b_1, \ldots, \b_\ell \in \Real^{r}$ be the points in $\Delta$.
We draw the MVEE centered at the origin 
for a set $\SC = \{\pm \g_1, \ldots, \pm \g_r, \pm \b_1, \ldots, \pm \b_\ell\}$.
Then, the proposition says
that the ellipsoid  only touches the points $\pm \g_1, \ldots, \pm \g_r$ 
among all the points in $\SC$.
Therefore, the vertices  of $\Delta$  can be found
by checking whether the points in $\SC$ lie on the boundary of ellipsoid.
We should mention that 
the convex hull of the points in $\SC$ becomes a full-dimensional 
crosspolytope in $\Real^r$.
Figure \ref{Fig: erAlg} illustrates the MVEE for a crosspolytope
in $\Real^3$.

\begin{figure}[h]
 \begin{center}
 \includegraphics[width=0.4\columnwidth]{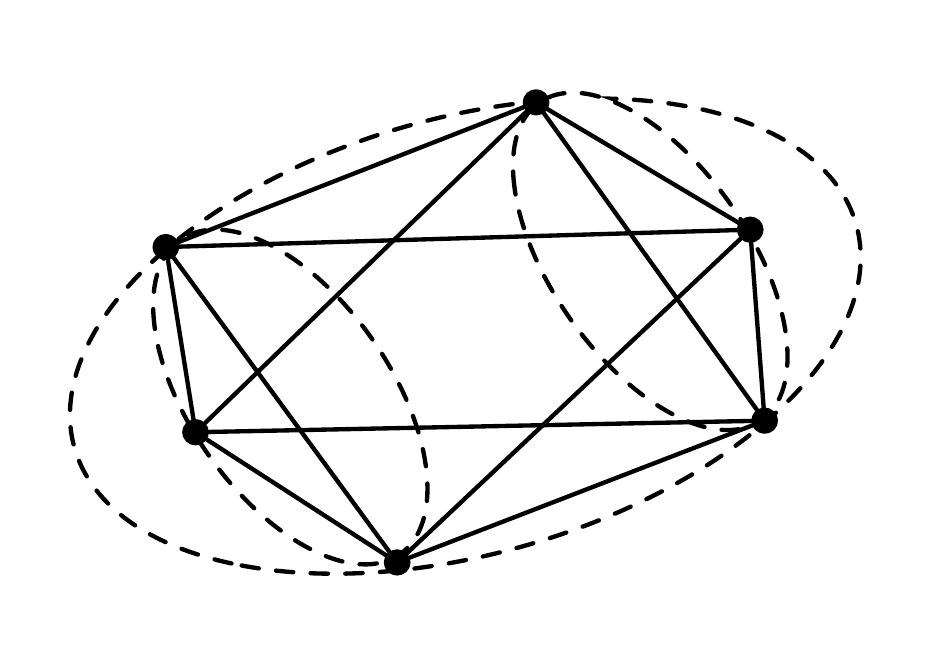}
 \caption{Minimum-volume enclosing ellipsoid for a full-dimensional 
  crosspolytope in $\Real^3$.}
 \label{Fig: erAlg}
 \end{center}
\end{figure}

Under Assumption \ref{Assump: Separability},
the convex hull of a separable matrix $\A$ 
becomes an $(r-1)$-dimensional simplex in $\Real^d$.
Therefore, we rotate and embed the simplex in $\Real^r$
by using an orthogonal transformation.
Such a transformation can be obtained by singular value decomposition (SVD) of $\A$.

Now let us outline our algorithm for Problem \ref{Prob: SepNMF}.
In this description, 
we assume for simplicity 
that the data matrix is a separable one  $\A \in \Real^{d \times m}_+$.
First, the algorithm constructs an orthogonal transformation 
through the SVD of $\A$.
By applying the transformation,
it transforms $\A$ into a matrix $\P \in \Real^{r \times m}$ such that
the $\mbox{conv}(\P)$ is an $(r-1)$-dimensional simplex in $\Real^r$.
Next, it draws the MVEE centered at the origin
for a set $\SC = \{ \pm \p_1, \ldots, \pm \p_m\}$, 
where $\p_1, \ldots, \p_m$ are the column vectors of $\P$, 
and outputs $r$ points lying on the ellipsoid.

We call the algorithm {\it ellipsoidal rounding}, abbreviated as ER.
The main computational costs of ER are in computing 
the SVD of $\A$ and the MVEE for $\SC$.
The MVEE computation can be formulated as 
a tractable convex optimization problem with $m$ variables.
A polynomial-time algorithm exists, and 
it is also known that a hybrid of the interior-point algorithm and cutting plane algorithm 
works efficiently in practice.

In later sections, we will see that 
ER algorithm works well even if noise is added.
In particular, we show 
that ER correctly identifies the near-basis vectors of a noisy separable matrix
if the noise is smaller than some level.
We consider a situation in which the noise exceeds that level.
In such a situation,
the shape of crosspolytope formed by the data points
is considerably perturbed by the noise, and 
it is possible that the MVEE touches many points.
We thus need to select $r$ points from the points 
on the boundary of the ellipsoid.
In this paper, we perform existing algorithms such as 
SPA \cite{Gil13} and XRAY \cite{Kum13} to make the selection.
Hence, ER works as a preprocessor 
which filters out basis vector candidates from data points
and enhances the performance of existing algorithms.

\subsection{Related Work} \label{Subsec: Related work}

First, we will review the algorithms for NMF of general nonnegative matrices.
There are an enormous number of studies.
A commonly used approach is to formulate it as a nonconvex optimization problem 
and compute the local solution.
Let $\A$ be a $d$-by-$m$ nonnegative matrix, and
consider an optimization problem with matrix variables 
$\F \in \Real^{d \times r}$ and $\W \in \Real^{r \times m}$,
\begin{equation*}
 \mbox{minimize} \ ||\F\W - \A ||_F^2 \ 
  \mbox{subject to} \ \F \ge \0 \ \mbox{and} \  \W \ge \0.
\end{equation*}
This is an intractable nonconvex optimization problem, 
and in fact, it was shown in \cite{Vav09} to be NP-hard.
Therefore, the research target is in how to compute 
the local solution efficiently.
It is popular to use the block coordinate descent (BCD) algorithm 
for this purpose.
The algorithm solves the problem by alternately fixing the variables $\F$ and $\W$.
The problem obtained by fixing either of $\F$ and $\W$ 
becomes a convex optimization problem.
The existing studies propose to use, for instance, 
the projected gradient algorithm \cite{Lin07} and its variant \cite{Lee00}, 
active set algorithm \cite{Kim08a, Kim11}, and projected quasi-Newton algorithm \cite{Gon12}.
It is reported that 
the BCD algorithm shows good performance on average in computing NMFs.
However, its performance depends on how we choose the initial point 
for starting the algorithm.
We refer the reader to \cite{Kim14} for a survey on the algorithms for NMF.

Next, we will survey the algorithms that work on noisy separable matrices.
Four types of algorithm can be found:
\begin{itemize}
 \item {\bf AGKM \cite{Aro12a}.} 
       The algorithm constructs $r$ sets of data points such that 
       all of the basis vectors are contained 
       in the union of the sets and each set has one basis vector.
       The construction entails solving
       $m$ linear programming (LP) problems with $m-1$ variables.
       Then, it chooses one element from each set, and  outputs them.

 \item {\bf Hottopixx \cite{Bit12, Gil13a, Gil13b}.}
       Let $\A$ be a separable matrix of the form $\F(\I,\K)\Pib$.
       Consider a matrix $\C$ such that
       \begin{equation*}
	 \C =  \Pib^{-1} 
	 \left(
	 \begin{array}{cc}
	  \I & \K \\
	  \0 & \0 
	 \end{array}
       \right) \Pib \in \Real^{m \times m}.
       \end{equation*}
       It satisfies $\A = \A\C$, and also, 
       if the diagonal element is one, 
       the position of its diagonal element 
       indicates the index of basis vector in $\A$.
       The algorithm models $\C$ as the variable of an LP problem.
       It entails solving a single LP problem with $m^2$ variables.
       
 \item {\bf SPA \cite{Gil13}.}
       Let $\A$ be a separable matrix of size $d$-by-$m$, 
       and $\SC$ be the set of the column vectors of $\A$.
       The algorithm is based on the following observation.
       Under Assumption \ref{Assump: Separability},
       the maximum of a convex function over 
       the elements in $\SC$ is 
       attained at the vertex of $\mbox{conv}(\A)$.
       The algorithm finds one element $\a$ in $\SC$ that maximizes 
       a convex function, and then,
       projects all elements in $\SC$ into the orthogonal space to $\a$.
       This procedure is repeated until $r$ elements are found.

 \item {\bf XRAY \cite{Kum13}.}
       The algorithm has a similar spirit as SPA, 
       but it uses a linear function instead of a convex one.
       Let $\A$ be a separable matrix of size $d$-by-$m$
       and $\SC$ be the set of the column vectors of $\A$.
       Let $\IC_k$ be the index set 
       obtained after the $k$th iteration.
       This is a subset of $\{1, \ldots,m\}$ with cardinality $k$.
       In the $(k+1)$th iteration, 
       it computes a residual matrix $\R = \A(\IC_k)\X^* - \A$, where
       \begin{equation*}
       \X^* = \arg \min_{\XS \ge \zeros}||\A(\IC_k)\X - \A||_2^2,
       \end{equation*}
       and picks up one of the column vectors $\r_i$ of $\R$.
       Then, it finds one element from $\SC$ which maximizes 
       a linear function having $\r_i$ as the normal vector.
       Finally, $\IC_k$ is updated by adding the index of the obtained element.
       This procedure is repeated until $r$ indices are found.
       The performance of XRAY depends on how we select the column vector of
       the residual matrix $\R$ for making the linear function.
       Several ways of selection, 
       called ``rand'', ``max'', ``dist'' and ``greedy'', 
       have been proposed by the authors.
\end{itemize}
The next section describes the properties of these algorithms.

\subsection{Comparison with Existing Algorithm}
\label{Subsec: Cmp}
We compare the correctness and robustness properties
of ER with those of AGKM, Hottopixx, SPA, and XRAY.
ER is shown to have the two properties
in Theorems \ref{Theo: Correctness} and \ref{Theo: Robustness}.
In particular, our robustness property in Theorem \ref{Theo: Robustness}
states that 
ER correctly identifies the near-basis matrix of a noisy separable one $\wt{\A}$,
and a robustness property with $\tau = 1$ holds if we set 
\begin{equation}
\epsilon = \frac{\sigma (1-\mu)}{4} 
 \label{Eq: epsilon of ER}
\end{equation}
and $p=2$ under Assumption  \ref{Assump: Separability}.
Here,  $\sigma$ is the minimum singular value of 
the basis matrix $\F$ of a separable one $\A$ in the $\wt{\A}$, 
i.e., $\sigma = \sigma_r(\F)$, and 
$\mu$ is $\mu(\K)$:
\begin{equation} \label{Eq: Mu}
 \mu(\K) = \max_{i=1,\ldots,\ell} ||\k_i||_2
\end{equation}
for a weight matrix $\K$ of $\A$.
Under Assumption 1-a,
we have $\mu \le 1$, and in particular, 
equality holds if and only if $\k_i$ has only one nonzero element.

All four of the existing algorithms have been shown 
to have a correctness property, 
whereas every one except XRAY has a robustness property.
Hottopixx is the most similar to ER.
The authors of \cite{Bit12} showed that 
it has the correctness and robustness with $\tau = 1$ properties
if one sets
\begin{equation}
\epsilon = \frac{\alpha  \min\{d_0, \alpha\}}{9(r+1)}
 \label{Eq: epsilon of Hot}
\end{equation}
and $p=1$ under simpliciality and other assumptions.
Here, $\alpha$ and $d_0$ are as follows.
$\alpha$ is the minimum value of $\delta_{\FS}(j)$ for $j=1,\ldots,r$,
where $\delta_{\FS}(j)$ denotes an $\ell_1$-distance between 
the $j$th column vector $\f_j$ of $\F$ and 
the convex hull of the remaining column vectors in $\F$.
$d_0$ is the minimum value of $||\a_i- \f_j||_1$
for every $i$ such that $\a_i$ is not a basis vector, 
and every $j = 1, \ldots, r$.
The robustness of Hottopixx is further 
analyzed in \cite{Gil13a, Gil13b}.

It can be interpreted that
the $\epsilon$ of ER (\ref{Eq: epsilon of ER}) is given by 
the multiplication of two parameters representing 
flatness and closeness of a given data matrix
since $\sigma$ measures the flatness of the convex hull of data points, and 
$1-\mu$ measures the closeness between basis vectors and data points.
Intuitively, we may say that
an algorithm becomes sensitive to noise when a data matrix has the following features;
one is that the convex hull of data points is close to a flat shape, 
and another is that there are data points close to basis vectors.
The $\epsilon$ of (\ref{Eq: epsilon of ER}) well matches the intuition.
We see a similar structure in the $\epsilon$ of Hottopixx (\ref{Eq: epsilon of Hot})
since $\alpha$ and $d_0$ respectively measure
the flatness and closeness of a given data.

Compared with Hottopixx, 
the $\epsilon$ of ER (\ref{Eq: epsilon of ER}) does not contain $1/r$, 
and hence, it does not decrease as  $r$ increases.
However,  Assumption 1-b of ER 
is stronger than the simpliciality of Hottopixx.
In a practical implementation,
ER can handle a large matrix, 
while Hottopixx may have limitations on the size of the matrix
it can handle.
Hottopixx entails solving an LP problem with $m^2$ variables.
In the NMFs arising in applications, $m$ tends to be a large number.
Although an LP is tractable, 
it becomes harder to solve as the size increases.
Through experiments,
we assessed the performance of Hottopixx with the CPLEX LP solver.
The experiments showed that 
the algorithm had out of memory issues
when $m$ exceeded 2,000 with $d=100$.
The authors of \cite{Bit12} proposed a parallel implementation 
to resolve these computational issues.

AGKM and SPA were shown in \cite{Aro12a} and \cite{Gil13}
to have a robustness property with $\tau \ge 1$ for some $\epsilon$.
In practical implementations,
SPA and XRAY are scalable to the problem size
and experimentally show good robustness. 
Section \ref{Sec: Experiment} reports
a numerical comparison of ER with SPA and XRAY.

\section{Review of Formulation and Algorithm for MVEE Computation} \label{Sec: MVEE}

We review the formulation 
for computing the MVEE for a set of points, 
and survey the existing algorithms for the computation.

First of all, 
let us recall the terminology related to an ellipsoid.
An ellipsoid in $\Real^d$ is defined as 
a set   $\EC(\L, \z) = \{ \x \in \Real^d : (\x - \z)^{\top}\L (\x - \z) \le 1\}$ 
for a positive definite matrix $\L$ of size $d$ and a vector $\z \in \Real^d$.
Here, $\L$ determines the shape of the ellipsoid and $\z$ is the center.
Let $\x$ be a point in an ellipsoid $\EC(\L, \z)$.
If the point $\x$ satisfies the equality $(\x - \z)^{\top}\L (\x - \z) = 1$, 
we call it an {\it active point} of the ellipsoid.
In other words, an active point is  one lying on the boundary of the ellipsoid.

The volume of the ellipsoid  is  given as $c(d) / \sqrt{\det \L}$,
where $c(d)$ represents the volume of a unit ball in $\Real^d$ 
and it is a real number depending on the dimension $d$.
ER algorithm considers $d$-dimensional ellipsoids 
containing a set $\SC$ of points in $\Real^d$, and 
in particular,  finds the minimum volume ellipsoid 
centered at the origin.
In this paper, such an ellipsoid is referred to as an origin-centered MVEE for short.

Now, we are ready to describe a formulation 
for computing the origin-centered MVEE for a set of points.
For $m$ points $\p_1, \ldots, \p_m \in \Real^d$, 
let $\SC = \{\pm \p_1, \ldots, \pm \p_m \}$.
The computation of the origin-centered MVEE for $\SC$ is formulated as 
\begin{equation*}
 \begin{array}{lll}
  \Primal(\SC): & \mbox{minimize}    & -\log \det \L, \\
                & \mbox{subject to} & \langle  \p_i \p_i^{\top}, \L \rangle \le 1, \quad i = 1, \ldots, m, \\
                &     & \L \succ \0,
 \end{array}
\end{equation*}
where the matrix $\L$ of size $d$ is the decision variable.
The optimal solution $\L^*$ of $\Primal$ gives
the origin-centered MVEE for $\SC$  as $\EC(\L^*) = \{\x : \x^\top \L^* \x \le 1 \}$.
We here introduce some terminology. 
An active point of $\EC(\L^*)$
is  a vector $\p_i \in \Real^d$ satisfying $\p_i^\top \L^* \p_i = 1$.
We call $\p_i$  an {\it active point of $\Primal(\SC)$}, 
and  the index $i$ of $\p_i$ an {\it active index of $\Primal(\SC)$}.
The ellipsoid $\EC(\L^*)$ is centrally symmetric, and 
if a vector $\p_i$ is an active point, so is $-\p_i$.
The dual of $\Primal$ reads 
\begin{equation*}
 \begin{array}{lll}
  \Dual(\SC): & \mbox{maximize}    &  \log \det \Omega(\u), \\
              & \mbox{subject to}   &  \e^{\top} \u = 1, \\
              &                &  \u \ge \0,
 \end{array}
\end{equation*}
where the vector $\u$ is the decision variable.
Here, $\Omega: \Real^m \rightarrow \SymMat^{d}$ is a linear function given as
$\Omega(\u) = \sum_{i=1}^{m} \p_i \p_i^{\top} u_i$;
equivalently, $\Omega(\u) = \P \mbox{diag}(\u) \P^{\top}$ 
for $\P =(\p_1, \ldots, \p_m) \in \Real^{d \times m}$.
It follows from the Karush-Kuhn-Tucker (KKT) conditions for these problems 
that the optimal solution $\L^*$ of $\Primal$ is represented by 
$\frac{1}{d} \Omega(\u^*)^{-1}$ for the optimal solution $\u^*$ of $\Dual$.
We make the following assumption to 
ensure the existence of an optimal solution of $\Primal$.
\begin{assump} \label{Assump: MVEE}
 $\mbox{rank}(\P) = d$ for $\P = (\p_1, \ldots, \p_m) \in \Real^{d \times m}$.
\end{assump}

Later, the KKT conditions will play an important role in
our discussion of the active points of $\Primal$.
Here though, we will describe the conditions:
$\L^* \in \SymMat^{d}$ is an optimal solution  for $\Primal$ 
and $\z^* \in \Real^{m}$ is the associated Lagrange multiplier vector if and only if
there exist $\L^* \in \SymMat^{d}$ and $\z^* \in \Real^{m}$ such that
 \begin{subequations}
  \label{Eq: KKT}
   \begin{eqnarray}
 & & -(\L^*)^{-1} +  \Omega(\z^*) = \0, \label{Eq1: KKT} \\
 & &  z_i^* ( \langle \p_i \p_i^{\top}, \L^* \rangle - 1) = 0, \quad i = 1, \ldots, m, \label{Eq2: KKT} \\
 & &  \langle \p_i \p_i^{\top}, \L^* \rangle \le 1, \quad  i = 1, \ldots, m, \label{Eq3: KKT} \\
 & &  \L^* \succ \0, \label{Eq4: KKT} \\
 & &   z_i^* \ge 0, \quad i = 1, \ldots, m. \label{Eq5: KKT}
   \end{eqnarray}
 \end{subequations}

Many algorithms have been proposed for solving problems $\Primal$ and $\Dual$.
These can be categorized into mainly two types:
conditional gradient algorithms (also referred to as Frank-Wolfe algorithms)
and interior-point algorithms.
Below, we survey the studies on these two algorithms.

Khachiyan in \cite{Kha96} proposed  a barycentric coordinate descent algorithm, 
which  can be interpreted as a conditional gradient algorithm.
He showed that the algorithm has a polynomial-time iteration complexity.
Several researchers investigated and revised Khachiyan's algorithm.
Paper \cite{Kum05} showed that 
the iteration complexity of Khachiyan's algorithm can be slightly reduced 
if it starts from a well-selected initial point.
Papers \cite{Tod07} and \cite{Ahi08} incorporated 
a step called as a Wolfe's away-step.
The revised algorithm was shown to 
have a polynomial-time iteration complexity and a linear convergence rate.

A dual interior-point algorithm was given in \cite{Van98}.
A primal-dual interior-point algorithm was given in \cite{Toh99a},
and numerical experiments showed that 
this algorithm is efficient and can provide accurate solutions.
A practical algorithm was designed in \cite{Sun04} for solving large-scale problems.
In particular, 
a hybrid of the interior-point algorithm and cutting plane algorithm was shown to be 
efficient in numerical experiments.
For instance, paper \cite{Sun04} reported that 
the hybrid algorithm can solve problems 
with $d=30$ and $m=30,000$ in under 30 seconds on a personal computer.
Paper \cite{Tsu07} considered generalized forms of $\Primal$ and $\Dual$
and showed that a primal-dual interior-point algorithm 
for the generalized forms has a polynomial-time iteration complexity.

Next, let us discuss the complexity of these 
two sorts of algorithms for $\Primal$ and $\Dual$.
In each iteration, 
the arithmetic operations of
the conditional gradient algorithms are 
less than those of the interior-point algorithms.
Each iteration of a conditional gradient algorithm \cite{Kha96, Kum05, Tod07, Ahi08}
requires $O(md)$ arithmetic operations.
On the other hand, assuming that 
the number of data points $m$ is sufficiently larger than the dimension of data points $d$,
the main complexity of interior-point algorithms \cite{Van98, Toh99a} 
comes from solving an $m$-by-$m$ system of linear equations in each iteration.
The solution serves as the search direction for the next iteration.
Solving these linear equations requires $O(m^3)$ arithmetic operations.
In practice, 
the number of iterations of conditional gradient algorithms
is much larger than that of interior-point algorithms.
As the paper \cite{Ahi08} reports, 
conditional gradient algorithms take several thousands iterations 
to solve  problems 
such that $d$ runs from $10$ to $30$ and $m$ from $10,000$ to $30,000$.
On the other hand, as paper \cite{Sun04} reports, 
interior-point algorithms usually terminate after several dozen iterations 
and provide accurate solutions.

One of the concerns about interior-point algorithms is the computational cost 
of each iteration.
It is possible to reduce the cost considerably 
by using a cutting plane strategy.
A hybrid of interior-point algorithm and cutting plane algorithm
has an advantage over conditional gradient algorithms.
In fact,  paper \cite{Ahi08} reports that
the hybrid algorithm is faster than 
the conditional gradient algorithms and works well even on large problems.
Therefore, we use the hybrid algorithm to solve $\Primal$
in our practical implementation of ER.
The details are  in Section \ref{Subsec: Cutting plane}.

Here, it should be mentioned that this paper uses
a terminology ``cutting plane strategy'' for 
what other papers \cite{Sun04, Ahi08} have called 
the ``active set strategy'', 
since it might be confused with ``active set algorithm'' 
for solving a nonnegative least square problem.

 \section{Description and Analysis of the Algorithm} 
 \label{Sec: Algorithm}
The ER algorithm is presented below.
Throughout of this paper,
we use  the notation $\Natural$ to denote a set of nonnegative integer numbers.
\begin{algorithm}
 \noindent 
 \caption{Ellipsoidal Rounding (ER) for  Problem \ref{Prob: SepNMF}}
 \label{Alg: ER}
 \textbf{Input:}  $\M \in \Real^{d \times m}_+$ and $r \in \Natural$.  \\
 \textbf{Output:} $\IC$.
 \begin{enumerate}
  \item[\textbf{1:}] 
	       Compute the SVD of $\M$, and construct the reduced matrix 
	       $\P \in \Real^{r \times m}$  associated with $r$.

  \item[\textbf{2:}] 
	       Let $\SC = \{\pm \p_1, \ldots, \pm \p_m\}$ 
	       for the column vectors $\p_1, \ldots, \p_m$ of $\P$.
	       Solve $\Primal(\SC)$, and construct the active index set $\IC$.

 \end{enumerate}
\end{algorithm}

Step 1 needs to be explained in detail.
Let $\M$ be a noisy separable matrix of size $d$-by-$m$.
In general, the $\M$ is a full-rank due to the existence of a noise matrix.
However, the rank  is  close to $r$ 
when the amount of noise is small, and in particular, 
it is $r$ in the noiseless case.
Accordingly, 
we construct a low-rank approximation matrix to $\M$ 
and reduce the redundancy in the space 
spanned by the column vectors of $\M$.

We use an SVD for the construction of the low-rank approximation matrix.
The SVD of $\M$ gives a decomposition of the form,
\begin{equation*}
\M = \U\Sigma\V^{\top}.
\end{equation*}
Here, 
$\U$ and $\V$ are 
$d$-by-$d$ and $m$-by-$m$ orthogonal matrices, respectively.
In this paper, we call the $\U$ a {\it left orthogonal matrix} of the SVD of $\M$.
Let $t = \min\{d,m\}$.
$\Sigma$ is 
a rectangular diagonal matrix consisting of the singular values 
$\sigma_1, \ldots, \sigma_t$ of $\M$, and it is of the form, 
\begin{equation*}
 \Sigma = \mbox{diag}(\sigma_1, \ldots, \sigma_t) \in \Real^{d \times m} 
\end{equation*}
with $\sigma_1 \ge \cdots \ge \sigma_t \ge 0$.
By choosing the top $r$ singular values while setting the others to $0$ in $\Sigma$,
we construct
\begin{equation*}
 \Sigma^r = \mbox{diag}(\sigma_1, \ldots, \sigma_r, 0, \ldots, 0) 
  \in \Real^{d \times m}
\end{equation*}
and let 
\begin{equation*}
\M^r = \U \Sigma^r \V^{\top}.
\end{equation*}
$\M^r$ is the best rank-$r$ approximation to $\M$
as measured by the matrix 2-norm and  satisfies
$||\M - \M^r||_2 = \sigma_{r+1}$ 
(see, for instance, Theorem 2.5.3 of \cite{Gol96}).
By applying the left orthogonal matrix $\U^\top$ to $\M^r$, we have
\begin{equation*}
\U^{\top} \M^r = \left(
 \begin{array}{c}
  \P \\
  \0
 \end{array}
\right) \in \Real^{d \times m},
\end{equation*}
where $\P$ is an $r$-by-$m$ matrix with $\mbox{rank}(\P) = r$.
We call such a matrix $\P$ 
a {\it reduced matrix of $\M$ associated with $r$}.
Since Assumption \ref{Assump: MVEE} holds for the $\P$, 
it is possible to  perform an MVEE computation for a set of the column 
vectors.

\subsection{Correctness for a Separable Matrix} \label{Subsec: Analysis}
We analyze the correctness property of Algorithm \ref{Alg: ER}.
Let $\A$ be a separable matrix of size $d$-by-$m$.
Assume that Assumption \ref{Assump: Separability} holds for $\A$.
We run Algorithm \ref{Alg: ER} for $(\A, \mbox{rank}(\A))$.
Step 1 computes the reduced matrix $\P$ of $\A$.
Since $r = \mbox{rank}(\A)$,
we have $\A=\A^r$, where 
$\A^r$ is  the best rank-$r$ approximation matrix  to $\A$.
Let $\U \in \Real^{d \times d}$ be the left orthogonal matrix of the SVD of $\A$. 
The reduced matrix $\P \in \Real^{r \times m}$ of $\A$ is obtained as
\begin{eqnarray}
 \left(
 \begin{array}{c}
  \P \\
  \0
 \end{array}
\right)
 &=& \U^\top \A  \nonumber \\
 &=& \U^\top \F(\I, \K) \Pib.
  \label{Eq: Rep of P}
\end{eqnarray}
From the above, we see that
\begin{equation} \label{Eq: Relation of W and G}
 \U^\top \F = 
  \left(
 \begin{array}{c}
  \G \\
  \0
 \end{array}
\right) \in \Real^{d \times m}, \ \mbox{where} \ \G \in \Real^{r \times r}.
\end{equation}
Here, we have $\mbox{rank}(\G) = r$
since $\mbox{rank}(\F) = r$ by Assumption \ref{Assump: Separability}-b 
and $\U$ is an orthogonal matrix.
By using $\G$, we rewrite $\P$ as 
\begin{equation*}
\P  = (\G, \G\K) \Pib.
\end{equation*}
From Assumption \ref{Assump: Separability}-a, 
the column vectors $\k_i$ of the weight matrix $\K \in \Real^{r \times \ell}$ 
satisfy the conditions
\begin{equation} \label{Eq: K cond}
 ||\k_i||_1=1 \  \mbox{and} \ \k_i \ge \0, \quad i = 1, \ldots, \ell.
\end{equation}

In Step 2, we collect the column vectors of $\P$ and construct a set $\SC$ of them.
Let $\B = \G\K$, and 
let $\g_j$  and $\b_i$ be the column vector of $\G$ and  $\B$, respectively.
$\SC$ is a set of vectors $\pm \g_1, \ldots, \pm \g_r, \pm \b_1, \ldots, \pm \b_\ell$.
The following proposition guarantees that
the active points of $\Primal(\SC)$ are $\g_1, \ldots, \g_r$.
We can see from (\ref{Eq: Rep of P}) and (\ref{Eq: Relation of W and G}) that
the index set of the column vectors of $\G$ is identical to that of of $\F$.
Hence, the basis matrix $\F$ of a separable one $\A$ can be obtained 
by finding the active points of $\Primal(\SC)$.
 \begin{prop} \label{Prop: active points of simplex}
  Let $\G \in \Real^{r \times r}$ and $\B = \G\K \in \Real^{r \times \ell}$ 
  for $\K \in \Real^{r \times \ell}$.
  For the column vectors $\g_j$ and $\b_i$ of $\G$ and $\B$, respectively,
  let $\SC = \{\pm \g_1, \ldots \pm \g_r, \pm \b_1, \ldots, \pm \b_\ell\}$.
  Suppose that $\mbox{rank}(\G) = r$ and $\K$ satisfies the condition (\ref{Eq: K cond}).
  Then, the active point set of $\Primal(\SC)$ is  $\{\g_1, \ldots, \g_r\}$.
\end{prop}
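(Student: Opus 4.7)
The plan is to solve $\Primal(\SC)$ in closed form by exploiting the invertibility of $\G$, then read off the active points directly. Since $\mbox{rank}(\G) = r$, the matrix $\G$ is invertible, so I can introduce the change of variables $\M = \G^{\top} \L \G$, which is a bijection on the cone of $r \times r$ positive-definite matrices. Under this substitution, $\g_j^{\top} \L \g_j = M_{jj}$ (because $\G^{-1}\g_j$ is the $j$-th standard basis vector) and $\b_i^{\top} \L \b_i = \k_i^{\top} \M \k_i$ (because $\b_i = \G\k_i$). The objective $-\log\det \L$ differs from $-\log\det \M$ only by the additive constant $\log\det(\G\G^{\top})$, so $\Primal(\SC)$ is equivalent to minimizing $-\log\det \M$ over $\M \succ \0$ subject to $M_{jj} \leq 1$ for $j = 1,\ldots,r$ and $\k_i^{\top} \M \k_i \leq 1$ for $i = 1,\ldots,\ell$.

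I would next show that $\M^{*} = \I$ is the unique optimum of this reduced problem. Feasibility is immediate: $M_{jj}^{*} = 1$, and $\k_i^{\top} \I \k_i = ||\k_i||_2^2 \leq ||\k_i||_1^2 = 1$, where the inequality uses $\k_i \geq \0$ together with $||\k_i||_1 = 1$. For optimality, Hadamard's inequality gives $\det \M \leq \prod_j M_{jj} \leq 1 = \det \I$ for any feasible $\M$, and strict convexity of $-\log\det$ on the PD cone forces uniqueness of the minimizer. Unwinding the substitution then yields $\L^{*} = (\G\G^{\top})^{-1}$ as the unique optimum of $\Primal(\SC)$.

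It remains to identify the set of active points, i.e., the $\p \in \SC$ with $\p^{\top} \L^{*} \p = 1$. For $\p = \pm \g_j$, this value equals $M_{jj}^{*} = 1$, so every $\pm \g_j$ is active. For $\p = \pm \b_i$, the value equals $||\k_i||_2^2$, which attains $1$ only when equality holds in $||\k_i||_2 \leq ||\k_i||_1$; combined with $\k_i \geq \0$ and $||\k_i||_1 = 1$, this forces $\k_i$ to be a standard basis vector, whence $\b_i$ coincides with some $\g_j$. The set of active points is therefore exactly $\{\g_1, \ldots, \g_r\}$.

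The main step to handle carefully is the Hadamard-based optimality argument: one must use both the diagonality case of Hadamard's equality and the saturation of each individual diagonal bound $M_{jj} \leq 1$ to pin down $\M^{*} = \I$. No explicit KKT construction is needed, since strict convexity of the objective supplies uniqueness automatically. The remaining computations — verifying that the change of variables transports the constraints correctly and that $(\G\G^{\top})^{-1}$ is positive definite — are routine given that $\G$ is invertible.
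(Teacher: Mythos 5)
Your proof is correct, but it reaches the optimum by a genuinely different route than the paper. The paper guesses the solution $\L^* = (\G\G^{\top})^{-1}$ together with the multiplier vector $\z^* = (\e;\0)$ and verifies the KKT conditions directly, computing $\langle \g_j\g_j^{\top}, \L^*\rangle = 1$ and $\langle \b_i\b_i^{\top}, \L^*\rangle = \k_i^{\top}\k_i \le \|\k_i\|_1^2 = 1$. You instead normalize by the congruence $\M = \G^{\top}\L\G$, reduce $\Primal(\SC)$ to minimizing $-\log\det\M$ subject to $M_{jj}\le 1$ and $\k_i^{\top}\M\k_i\le 1$, and certify $\M^*=\I$ via Hadamard's inequality $\det\M \le \prod_j M_{jj} \le 1$, with uniqueness from strict convexity of $-\log\det$. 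The final step --- reading off activity from $\|\k_i\|_2^2 \le \|\k_i\|_1^2 = 1$ with equality iff $\k_i$ is a standard basis vector, in which case $\b_i$ coincides with some $\g_j$ --- is identical in both arguments. Your route is more elementary and self-contained (no dual certificate needed) and makes the uniqueness of $\L^*$ explicit, which the paper leaves implicit even though it is what makes ``the active point set'' well defined; the paper's KKT route has the advantage of exhibiting the dual solution and of meshing with the later uses of the KKT system (e.g., the lower bound on the number of active points in the implementation section). One minor bookkeeping point: your Hadamard bound also establishes existence of the minimizer constructively, since $\I$ is feasible and attains the global lower bound $0$ of $-\log\det\M$ over the feasible set, so no compactness argument is needed.
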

\begin{proof}
 We show that an optimal solution $\L^*$ of $\Primal(\SC)$ is
 $(\G\G^\top)^{-1}$ and its associated Lagrange multiplier $\z^*$ is  $(\e; \0)$,
 where $\e$ is an $r$-dimensional all-ones vector 
 and  $\0$ is an $\ell$-dimensional zero vector.
 Here, the Lagrange multipliers are one for the constraints 
  $\langle \g_j\g_j^{\top}, \L \rangle \le 1$, and 
 these are zero  for 
 $\langle \b_i\b_i^{\top}, \L \rangle \le 1$.

 Since $\G$ is nonsingular,
 the inverse of $\G\G^\top$  exists and it is positive definite.
 Now we check that $\L^* = (\G\G^{\top})^{-1}$ and $\z^* = (\e; \0)$ 
 satisfy the KKT conditions  (\ref{Eq: KKT}) for the problem.
 It was already seen that the conditions (\ref{Eq1: KKT}), (\ref{Eq4: KKT}), 
 and (\ref{Eq5: KKT}) are satisfied.
 For the remaining conditions, 
 we have 
 \begin{equation}
   \langle \g_j\g_j^{\top}, (\G\G^{\top})^{-1} \rangle 
    = (\G^{\top} (\G\G^{\top})^{-1}\G)_{jj} 
    =  1 \label{Eq: g const}
 \end{equation}
 and 
 \begin{eqnarray}
  \langle \b_i\b_i^{\top}, (\G\G^{\top})^{-1} \rangle  
  & = & (\B^{\top} (\G\G^{\top})^{-1}\B)_{ii}  \nonumber  \\
  & = & (\K^{\top} \G^{\top} (\G\G^{\top})^{-1}\G\K)_{ii} \nonumber\\
  & = & \k_i^{\top}\k_i \nonumber \\
  & \le & ||\k_i||_1^2 = 1. \label{Eq: b const}
 \end{eqnarray}
 Here, $(\cdot)_{ii}$ for a matrix denotes the $(i,i)$th element of the matrix.
 The inequality in (\ref{Eq: b const}) follows from  condition (\ref{Eq: K cond}).
 Also, the Lagrange multipliers are zero for 
 the inequality constraints 
 $\langle \b_i\b_i^{\top}, (\G\G^{\top})^{-1} \rangle \le 1$.
 Thus, conditions (\ref{Eq2: KKT}) and (\ref{Eq3: KKT}) are satisfied.
 Accordingly,  $(\G\G^{\top})^{-1}$  is an optimal solution of $\Primal(\SC)$.
 
 We can see from (\ref{Eq: g const}) that $\g_1, \ldots, \g_r$ are the 
 active points of the problem.
 Moreover, we may have equality in (\ref{Eq: b const}).
 In fact, equality holds if  and only if
 $\k_i$ has only one nonzero element.
 For such  $\k_i$, $\b_i = \G\k_i$ coincides with some vector in 
 $\g_1, \ldots, \g_r$.
\end{proof}

From the above discussion, 
we can immediately notice that this proposition 
holds if for a matrix $\K \in \Real^{r \times \ell}$,
the column vectors $\k_i$ satisfy 
\begin{equation} \label{Eq: K relaxed cond}
||\k_i||_2 < 1, \quad i = 1, \ldots, m.
\end{equation}
Note that in contrast with condition (\ref{Eq: K cond}), 
this condition does not require the matrix to be nonnegative.

\begin{coro} \label{coro: active points of simplex under relaxed cond}
 Proposition \ref{Prop: active points of simplex} holds 
 even if we suppose that 
 $\K \in \Real^{r \times \ell}$ satisfies
 condition  (\ref{Eq: K relaxed cond}), instead of condition (\ref{Eq: K cond}).
\end{coro}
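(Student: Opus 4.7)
My plan is to observe that the proof of Proposition \ref{Prop: active points of simplex} uses the hypothesis on $\K$ in exactly one place: the inequality chain (\ref{Eq: b const}), where the nonnegativity of $\k_i$ and $||\k_i||_1 = 1$ are invoked to conclude that $\k_i^\top \k_i \le ||\k_i||_1^2 = 1$. Under the relaxed condition (\ref{Eq: K relaxed cond}), this same bound $\k_i^\top \k_i < 1$ is obtained directly, since $\k_i^\top \k_i = ||\k_i||_2^2 < 1$ by hypothesis. So the strategy is to rerun the proof verbatim with this single substitution.

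Concretely, I would propose the same candidate $\L^* = (\G\G^\top)^{-1}$ and $\z^* = (\e; \0)$, where $\e \in \Real^r$ and $\0 \in \Real^{\ell}$, and check the KKT system (\ref{Eq: KKT}). Nonsingularity of $\G$ (from $\mbox{rank}(\G) = r$) still ensures $\L^* \succ \0$, giving (\ref{Eq4: KKT}), and $\z^* \ge \0$ trivially gives (\ref{Eq5: KKT}). The stationarity condition (\ref{Eq1: KKT}) is unchanged and follows from $\Omega(\z^*) = \sum_{j=1}^r \g_j \g_j^\top = \G\G^\top = (\L^*)^{-1}$. The computation (\ref{Eq: g const}) showing $\langle \g_j\g_j^\top, \L^* \rangle = 1$ does not involve $\K$ at all, so it carries over. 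For the constraints associated with $\b_i$, the first two equalities of (\ref{Eq: b const}) still give $\langle \b_i\b_i^\top, \L^* \rangle = \k_i^\top \k_i$, and now the relaxed hypothesis yields $\k_i^\top \k_i = ||\k_i||_2^2 < 1$, which is even a strict form of (\ref{Eq3: KKT}). Complementary slackness (\ref{Eq2: KKT}) holds because the multipliers attached to the $\b_i$-constraints are zero.

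Having verified that $(\L^*, \z^*)$ is KKT-optimal, I would conclude as in the original argument that the active point set contains $\{\g_1, \ldots, \g_r\}$, and under the relaxed condition the strict inequality $\langle \b_i\b_i^\top, \L^* \rangle < 1$ rules out any $\b_i$ from being active. Hence the active point set is exactly $\{\g_1, \ldots, \g_r\}$, as claimed.

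There is essentially no obstacle here: the corollary is a pure relaxation observation, and the only subtlety is noticing that the strict inequality under (\ref{Eq: K relaxed cond}) is even cleaner than in the original proposition (where the boundary case $\k_i = \e_j$ had to be handled by remarking that $\b_i$ then coincides with some $\g_j$). Accordingly, I would frame the proof as a one-line reduction: reuse the primal-dual pair from Proposition \ref{Prop: active points of simplex}, and replace the bound $\k_i^\top \k_i \le ||\k_i||_1^2$ by the direct bound $\k_i^\top\k_i < 1$ supplied by the new hypothesis.
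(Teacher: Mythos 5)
Your proposal is correct and matches the paper's intent exactly: the paper gives no separate proof, stating only that the corollary is immediate from the proof of Proposition \ref{Prop: active points of simplex}, because the hypothesis on $\K$ enters only through the bound $\k_i^\top\k_i \le \|\k_i\|_1^2 = 1$ in (\ref{Eq: b const}), which the relaxed condition replaces directly by $\k_i^\top\k_i = \|\k_i\|_2^2 < 1$. Your additional remark that the strict inequality makes the boundary case disappear is a correct and slightly cleaner observation than what the paper records.
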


Note that 
this  corollary is used to show the robustness of 
Algorithm \ref{Alg: ER} on a noisy separable matrix.
The correctness of Algorithm \ref{Alg: ER} 
for a separable matrix
follows from the above discussion and Proposition \ref{Prop: active points of simplex}.
\begin{theo}
 Let $\A$ be a separable matrix.
 Assume that Assumption \ref{Eq: Separability} holds for $\A$.
 Then, Algorithm \ref{Alg: ER} for $(\A, \mbox{rank}(\A))$ 
 returns an index set $\IC$ such that $\A(\IC) = \F$.
 \label{Theo: Correctness}
\end{theo}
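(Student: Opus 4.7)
The plan is to chain together the structural facts that have already been assembled: the SVD-based reduction in Step 1 preserves the separable structure in a lower-dimensional space, and then Proposition~\ref{Prop: active points of simplex} directly identifies the active points of $\Primal(\SC)$ with the images of the basis vectors. So the argument reduces to verifying that the hypotheses of Proposition~\ref{Prop: active points of simplex} hold for the reduced matrix $\P$ produced in Step~1, and then transporting the conclusion back through the orthogonal transformation and the permutation $\Pib$ to the original data matrix $\A$.

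First I would observe that, since $\A$ is separable and Assumption~\ref{Assump: Separability}-b gives $\mbox{rank}(\F)=r$, we have $\mbox{rank}(\A) = r$ and the best rank-$r$ approximation satisfies $\A^r = \A$. Hence Step~1 produces a reduced matrix $\P \in \Real^{r \times m}$ exactly of the form given in (\ref{Eq: Rep of P}), and the identity $\U^\top \F = (\G;\0)$ from (\ref{Eq: Relation of W and G}) yields $\P = (\G, \G\K)\Pib$ with $\G \in \Real^{r \times r}$. Because $\U$ is orthogonal, $\mbox{rank}(\G) = \mbox{rank}(\F) = r$, so Assumption~\ref{Assump: MVEE} is satisfied for $\P$ and $\Primal(\SC)$ is well posed.

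Next, I would apply Proposition~\ref{Prop: active points of simplex} to the set $\SC$ built in Step~2. Its hypotheses are already in hand: $\mbox{rank}(\G)=r$ was just verified, and Assumption~\ref{Assump: Separability}-a immediately implies that each column of $\K$ satisfies the condition (\ref{Eq: K cond}). The proposition then asserts that the active-point set of $\Primal(\SC)$ is exactly $\{\g_1, \ldots, \g_r\}$, i.e., the columns of $\G$ corresponding under $\Pib$ to the positions of the basis vectors in $\A$.

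The final step is purely bookkeeping: the columns of $\P$ are obtained from the columns of $\A$ by applying $\U^\top$ and truncating the trailing zero block, so column $i$ of $\P$ equals the image of column $i$ of $\A$; in particular, the indices of $\g_1, \ldots, \g_r$ in $\P$ coincide with the indices of $\f_1, \ldots, \f_r$ in $\A$ under $\Pib$. Collecting these indices into $\IC$ therefore yields $\A(\IC) = \F$. The only subtlety I foresee is the degenerate case highlighted in the proof of Proposition~\ref{Prop: active points of simplex}, in which some $\k_i$ is a standard basis vector and $\b_i$ coincides with some $\g_j$: these duplicated columns are themselves copies of basis vectors of $\A$, so including their indices in $\IC$ is harmless for the conclusion $\A(\IC) = \F$, and this is the one point I would address explicitly to make the argument airtight.
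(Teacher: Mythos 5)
Your proposal is correct and follows essentially the same route as the paper: the paper's proof of Theorem~\ref{Theo: Correctness} is precisely the chain you describe — $\A = \A^r$, the reduction $\P = (\G, \G\K)\Pib$ with $\mbox{rank}(\G) = r$ via (\ref{Eq: Relation of W and G}), condition (\ref{Eq: K cond}) from Assumption~\ref{Assump: Separability}-a, Proposition~\ref{Prop: active points of simplex}, and the index correspondence between $\G$ and $\F$. Your explicit handling of the degenerate case where some $\b_i$ coincides with a $\g_j$ is a welcome extra precision that the paper only touches on inside the proof of the proposition.
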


\subsection{Robustness for a Noisy Separable Matrix}
Next, we analyze the robustness property of Algorithm \ref{Alg: ER}.
Let $\A$ be a separable matrix  of size $d$-by-$m$.
Assume that Assumption \ref{Assump: Separability} holds for  $\A$.
Let $\wt{\A}$ be a noisy separable matrix of the form $\A + \N$.
We run Algorithm \ref{Alg: ER} for $(\wt{\A}, \mbox{rank}(\A))$.
Step 1 computes the reduced matrix $\P$ of $\wt{\A}$.
Let $\U \in \Real^{d \times d}$ be the left orthogonal matrix of the SVD of $\wt{\A}$, 
and  $\wt{\A}^r$ be the best rank-$r$ approximation matrix to $\wt{\A}$.
We denote  the residual matrix $\wt{\A} - \wt{\A}^r$ by $\wt{\A}^r_\diamond$.
For the reduced matrix $\P$ of $\wt{\A}$, we have 
\begin{subequations}
\begin{eqnarray}
 \left(
 \begin{array}{c}
  \P \\
  \0
 \end{array}
\right) 
 &=& \U^\top \wt{\A}^r 
 \label{Eq: EquRep of P (a)}  \\
 &=& \U^\top(\wt{\A} - \wt{\A}^{r}_{\diamond}) 
  \label{Eq: EquRep of P (b)} \\
 &=& \U^\top(\A + \N - \wt{\A}^{r}_{\diamond}) 
  \label{Eq: EquRep of P (c)} \\
 &=& \U^\top(\A + \wb{\N})   
  \label{Eq: EquRep of P (d)} \\
 &=& \U^\top( (\F,\F\K)\Pib + \wb{\N})  
  \label{Eq: EquRep of P (e)} \\
 &=& \U^\top( \F + \wb{\N}^{(1)}, \F\K + \wb{\N}^{(2)})\Pib  
  \label{Eq: EquRep of P (f)}\\
 &=& \U^\top( \wh{\F}, \wh{\F}\K + \wh{\N})\Pib.
  \label{Eq: EquRep of P (g)}
\end{eqnarray}
\end{subequations}
The following notation is used in the above: 
$\wb{\N} = \N - \wt{\A}^r_\diamond$ in (\ref{Eq: EquRep of P (d)});
$\wb{\N}^{(1)}$ and $\wb{\N}^{(2)}$ in (\ref{Eq: EquRep of P (f)}) are
the $d$-by-$r$ and $d$-by-$\ell$ submatrices of $\wb{\N}$ such that
$\wb{\N} \Pib^{-1} = (\wb{\N}^{(1)}, \wb{\N}^{(2)})$;
$\wh{\F} = \F + \wb{\N}^{(1)}$ and 
$\wh{\N} = -\wb{\N}^{(1)}\K + \wb{\N}^{(2)}$ in (\ref{Eq: EquRep of P (g)}).
This  implies that 
\begin{equation} \label{Eq: HatG}
 \U^\top\wh{\F} = 
 \left(
 \begin{array}{c}
  \wh{\G}\\ 
  \0
 \end{array}
\right), \ \mbox{where} \ \wh{\G} \in \Real^{r \times r},
\end{equation}
and 
\begin{equation} \label{Eq: R}
 \U^\top\wh{\N} = 
 \left(
 \begin{array}{c}
  \R\\ 
  \0
 \end{array}
\right),
\ \mbox{where} \  \R \in \Real^{r \times \ell}.
\end{equation}
Hence, we can rewrite $\P$ as 
\begin{equation*}
 \P = (\wh{\G}, \wh{\G}\K + \R) \Pib.
\end{equation*}
$\wt{\A}$ is  represented by (\ref{Eq: Rep of noisy separable matrix}) as
\begin{equation*}
 \wt{\A}  =  (\wt{\F}, \wt{\F}\K + \wt{\N})\Pib,
\end{equation*}
where $\wt{\F}$ and $\wt{\N}$ denote 
$\F + \N^{(1)}$ and $-\N^{(1)} \K + \N^{(2)}$, respectively.
From  (\ref{Eq: EquRep of P (b)}), we have
\begin{equation*}
 \left(
 \begin{array}{c}
  (\wh{\G}, \wh{\G}\K + \R) \Pib \\
  \0
 \end{array}
\right) 
 = \U^\top((\wt{\F}, \wt{\F}\K + \wt{\N})\Pib - \wt{\A}^{r}_{\diamond}).
  \label{Eq: Rel of G and W}
\end{equation*}
Therefore, 
the index set of the column vectors of $\wh{\G}$ is identical to
that of $\wt{\F}$.
If all the column vectors of $\wh{\G}$ are found in $\P$,
we can identify $\wt{\F}$ hidden in $\wt{\A}$.

In Step 2, we collect the column vectors of $\P$ 
and construct a set $\SC$ of them.
Let 
\begin{equation}
 \wh{\B} = \wh{\G}\K + \R,
  \label{Eq: HatB}
\end{equation} 
and let 
$\wh{\g}_j$ and $\wh{\b}_i$ respectively be the column vectors 
of $\wh{\G}$ and $\wh{\B}$.
$\SC$ is a set of vectors 
$\pm \wh{\g}_1, \ldots, \pm \wh{\g}_r, \pm \wh{\b}_1,\ldots, \pm \wh{\b}_\ell$.
We can see from Corollary \ref{coro: active points of simplex under relaxed cond} that, 
if $\mbox{rank}(\wh{\G}) = r$ and 
$\wh{\b}_i$ is written as $\wh{\b}_i = \wh{\G} \wh{\k}_i$ by using $\wh{\k}_i \in \Real^{r}$
with $||\wh{\k}_i||_2 < 1$, 
the active points of $\Primal(\SC)$ are given as 
the column vectors $\wh{\g}_1, \ldots, \wh{\g}_r$ of $\wh{\G}$.
Below, we examine the amount of noise $\N$
such that the conditions of 
Corollary \ref{coro: active points of simplex under relaxed cond} still hold.
 \begin{lemm} \label{Lemm: Singular value of perturbed A}
 Let $\wt{\A} = \A + \N \in \Real^{d \times m}$.
 Then, $|\sigma_i(\wt{\A}) - \sigma_i(\A)| \le ||\N||_2$ 
  for each $i=1, \ldots, t$ where $t=\min\{d,m\}$.
\end{lemm}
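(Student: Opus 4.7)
The statement is a classical singular-value perturbation bound (a variant of Weyl's inequality for singular values), so the plan is to derive it from the Courant--Fischer min--max characterization of singular values together with the operator-norm triangle inequality.

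The plan is to start from the identity
\begin{equation*}
\sigma_i(\A) = \max_{\substack{V \subset \Real^m \\ \dim V = i}} \min_{\substack{\x \in V \\ \|\x\|_2 = 1}} \|\A\x\|_2,
\end{equation*}
which holds for every $i = 1, \ldots, t$. I would fix an arbitrary $i$-dimensional subspace $V \subset \Real^m$ and, for each unit vector $\x \in V$, apply the triangle inequality to $\wt{\A}\x = \A\x + \N\x$ to obtain $\|\wt{\A}\x\|_2 \ge \|\A\x\|_2 - \|\N\x\|_2 \ge \|\A\x\|_2 - \|\N\|_2$, using the definition of the matrix $2$-norm in the last step. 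Taking the minimum over unit $\x \in V$ on both sides preserves the inequality, since the constant $\|\N\|_2$ is subtracted uniformly.

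Next I would take the maximum over all $i$-dimensional subspaces $V$. The left-hand side becomes $\sigma_i(\wt{\A})$ and the right-hand side becomes $\sigma_i(\A) - \|\N\|_2$, yielding
\begin{equation*}
\sigma_i(\wt{\A}) \ge \sigma_i(\A) - \|\N\|_2.
\end{equation*}
The reverse inequality follows by symmetry: write $\A = \wt{\A} + (-\N)$ and apply the same argument, noting $\|-\N\|_2 = \|\N\|_2$. Combining the two inequalities gives $|\sigma_i(\wt{\A}) - \sigma_i(\A)| \le \|\N\|_2$, as desired.

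There is no real obstacle in this argument; the only mild subtlety is justifying that the min--max formula applies uniformly across all indices $i \le t = \min\{d, m\}$ regardless of whether $\A$ is tall or wide. This is handled by working with the $m$-dimensional domain throughout (so that $i$-dimensional subspaces always exist for $i \le t$), or equivalently by padding with zero singular values beyond $\mathrm{rank}(\A)$. With that convention the proof is essentially two lines of bookkeeping once the triangle inequality step is in place.
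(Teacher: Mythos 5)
Your proof is correct. Note, however, that the paper does not actually prove this lemma: its entire ``proof'' is a citation to Corollary 8.6.2 of Golub and Van Loan, which states precisely this perturbation bound. Your argument via the Courant--Fischer characterization $\sigma_i(\A) = \max_{\dim V = i} \min_{\x \in V,\, \|\x\|_2 = 1} \|\A\x\|_2$, the triangle inequality, and symmetry in $\A$ and $\wt{\A}$ is the standard derivation of that textbook result, and your handling of the indices $i \le t = \min\{d,m\}$ (working with $i$-dimensional subspaces of $\Real^m$ and padding with zero singular values past the rank) is the right way to make it uniform. So you have supplied a complete, self-contained proof where the paper merely defers to a reference; there is nothing to fix.
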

\begin{proof}
See Corollary 8.6.2 of \cite{Gol96}.
\end{proof}
\begin{lemm} \label{Lemm: Noise size}
 Let $n = ||\N||_2$ and $\mu = \mu(\K)$.
 \begin{enumerate}[\ref{Lemm: Noise size}-a)]
  \item The matrix $\wb{\N}$ of (\ref{Eq: EquRep of P (d)}) satisfies $||\wb{\N}||_2 \le 2n$.
	
  \item The column vectors $\r_i$ of matrix $\R$ of (\ref{Eq: R})  
	satisfy $ ||\r_i||_2 \le 2n(\mu+1)$ for $i=1, \ldots, m$.

  \item  The singular values of  matrix $\wh{\G}$ of (\ref{Eq: HatG})
	 satisfy $ |\sigma_i(\wh{\G}) - \sigma_i(\F) | \le 2n$ for $i = 1, \ldots, r$.
 \end{enumerate}
\end{lemm}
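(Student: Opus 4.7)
The plan is to prove each bound in turn, with part (a) serving as the main building block that feeds into (b) and (c). The key observation underlying everything is that $\wb{\N}$ is the sum of the original noise $\N$ and the residual $\wt{\A}^r_\diamond = \wt{\A} - \wt{\A}^r$ from low-rank truncation, and the latter can be controlled by a singular value of $\wt{\A}$, which in turn can be compared to the corresponding singular value of $\A$ via Lemma \ref{Lemm: Singular value of perturbed A}. The fact that the original separable matrix $\A = \F\W$ satisfies $\mbox{rank}(\A) \le r$ under Assumption \ref{Assump: Separability} is what makes this chain close up cleanly.

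For part (a), I would first note that $||\wt{\A}^r_\diamond||_2 = ||\wt{\A} - \wt{\A}^r||_2 = \sigma_{r+1}(\wt{\A})$, which is the standard Eckart--Young characterization already alluded to after the definition of the reduced matrix. Since $\mbox{rank}(\A) = r$ gives $\sigma_{r+1}(\A) = 0$, Lemma \ref{Lemm: Singular value of perturbed A} yields $\sigma_{r+1}(\wt{\A}) \le n$. Then the triangle inequality applied to $\wb{\N} = \N - \wt{\A}^r_\diamond$ gives $||\wb{\N}||_2 \le n + n = 2n$.

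For part (b), the strategy is to unwind the definition $\wh{\N} = -\wb{\N}^{(1)}\K + \wb{\N}^{(2)}$ from (\ref{Eq: EquRep of P (g)}). Because $\U$ is orthogonal, taking the $i$th column of (\ref{Eq: R}) gives $||\r_i||_2 \le ||(\U^\top\wh{\N})_i||_2 = ||\wh{\n}_i||_2$, and $\wh{\n}_i = -\wb{\N}^{(1)}\k_i + \wb{\n}^{(2)}_i$. Since $\wb{\N}^{(1)}$ and $\wb{\N}^{(2)}$ are column submatrices of $\wb{\N}$ (obtained by permutation), both inherit the bound $||\wb{\N}^{(j)}||_2 \le ||\wb{\N}||_2 \le 2n$ from part (a). Combining this with $||\k_i||_2 \le \mu$ from the definition (\ref{Eq: Mu}) of $\mu(\K)$, the triangle inequality gives $||\r_i||_2 \le 2n\mu + 2n = 2n(\mu+1)$.

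For part (c), I would exploit orthogonal invariance of singular values: applying $\U^\top$ to $\wh{\F}$ yields $(\wh{\G};\0)$, so $\sigma_i(\wh{\G}) = \sigma_i(\wh{\F})$ for $i = 1, \ldots, r$ (the appended zero block does not create or destroy nonzero singular values). Since $\wh{\F} = \F + \wb{\N}^{(1)}$, a second application of Lemma \ref{Lemm: Singular value of perturbed A} with perturbation $\wb{\N}^{(1)}$ gives $|\sigma_i(\wh{\F}) - \sigma_i(\F)| \le ||\wb{\N}^{(1)}||_2 \le 2n$, and the desired bound follows. The only mildly delicate point—which I expect to be the main obstacle in writing the proof cleanly—is keeping careful track of the two permutations and the submatrix conventions so that the inequalities $||\wb{\N}^{(j)}||_2 \le ||\wb{\N}||_2$ are unambiguous; once that bookkeeping is in place, each bound reduces to an application of Lemma \ref{Lemm: Singular value of perturbed A} combined with the triangle inequality.
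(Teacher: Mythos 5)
Your proof is correct and follows essentially the same route as the paper's: part (a) via the triangle inequality on $\wb{\N}=\N-\wt{\A}^r_\diamond$ together with $||\wt{\A}^r_\diamond||_2=\sigma_{r+1}(\wt{\A})\le n$ from Lemma \ref{Lemm: Singular value of perturbed A}, part (b) by passing to $\wh{\n}_i=-\wb{\N}^{(1)}\k_i+\wb{\n}^{(2)}_i$ and using the submatrix norm bounds from (a), and part (c) by orthogonal invariance of singular values plus Weyl's inequality. You even make explicit the step $\sigma_{r+1}(\A)=0$ that the paper leaves implicit.
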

\begin{proof}
 \ref{Lemm: Noise size}-a) \ 
 Since $\wb{\N} = \N - \wt{\A}^{r}_\diamond$,
 \begin{equation*}
  ||\wb{\N}||_2 \le || \N||_2 + ||\wt{\A}^r_\diamond||_2.
 \end{equation*}
 We have $||\wt{\A}^r_\diamond||_2 \le n$
 since $||\wt{\A}^r_\diamond||_2 = \sigma_{r+1}(\wt{\A})$ and 
 from Lemma \ref{Lemm: Singular value of perturbed A},
 $|\sigma_{r+1}(\wt{\A}) - \sigma_{r+1}(\A)| \le n$.
 Therefore, $||\wb{\N}||_2 \le 2n$.

 \ref{Lemm: Noise size}-b)  \
 Let $\wh{\n}_i$ be the column vector  of the matrix $\wh{\N}$ of (\ref{Eq: EquRep of P (g)}).
 Since $\U^\top \wh{\n}_i = (\r_i; \0)$ for an orthogonal matrix $\U$, 
we have $||\wh{\n}_i||_2 = ||\r_i||_2$.
Therefore, we will evaluate $||\wh{\n}_i||_2$.
 Let $\k_i$  and $\wb{\n}^{(2)}_i$ be the  column vectors of $\K$ and $\wb{\N}^{(2)}$, 
 respectively.
 Then, $\wh{\n}_i$ can be represented as 
 $ - \wb{\N}^{(1)} \k_i + \wb{\n}^{(2)}_i$.
 Thus, by Lemma \ref{Lemm: Noise size}-a, we have
 \begin{equation*}
  ||\r_i||_2 = ||\wh{\n}_i||_2 
   \le ||\wb{\N}^{(1)}||_2 ||\k_i||_2 + ||\wb{\n}^{(2)}_i||_2 
  \le  2n(\mu+1).
 \end{equation*}

 \ref{Lemm: Noise size}-c)  \
 Since $\U^\top \wh{\F} = (\wh{\G}; \0)$ for an orthogonal matrix $\U$, 
 the singular values of $\wh{\F}$ and $\wh{\G}$ are identical.
 Also, since $\wh{\F} = \F + \wb{\N}^{(1)}$ and 
 Lemma \ref{Lemm: Singular value of perturbed A}, we have
 \begin{eqnarray*}
  |\sigma_i(\wh{\G}) - \sigma_i(\F)| =  |\sigma_i(\wh{\F}) - \sigma_i(\F)| 
   \le ||\wb{\N}^{(1)}||_2 
   \le 2n. 
 \end{eqnarray*}
\end{proof}

The following lemma ensures that 
the conditions of Corollary \ref{coro: active points of simplex under relaxed cond} 
hold if the amount of noise is smaller than a certain level.
\begin{lemm}\label{Lemm: Check cond}
 Let $\wh{\G}$ be the matrix of (\ref{Eq: HatG}), and 
 let $\wh{\b}_i$ be the column vector of $\wh{\B}$ of 
 (\ref{Eq: HatB}).
 Suppose that $||\N||_2 < \epsilon$ for 
 $\epsilon = \frac{1}{4}\sigma(1-\mu)$ 
 where $\sigma = \sigma_r(\F) $ and $\mu = \mu(\K)$.
 Then,
 \begin{enumerate}[\ref{Lemm: Check cond}-a)]
  \item $\mbox{rank}(\wh{\G}) = r$.
  \item $\wh{\b}_i$ is represented as 
	$\wh{\G} \wh{\k}_i = \wh{\b}_i$ by using $\wh{\k}_i$ such that $||\wh{\k}_i||_2 < 1$.
 \end{enumerate}
\end{lemm}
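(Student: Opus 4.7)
The plan is to bootstrap both parts of the lemma from the three bounds in Lemma \ref{Lemm: Noise size}, using the hypothesis $n := ||\N||_2 < \epsilon = \sigma(1-\mu)/4$ to ensure everything stays in the safe regime.

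For part \ref{Lemm: Check cond}-a, I would note that $\wh{\G}$ is square of size $r$, so rank $r$ is equivalent to $\sigma_r(\wh{\G}) > 0$. By Lemma \ref{Lemm: Noise size}-c, $\sigma_r(\wh{\G}) \ge \sigma_r(\F) - 2n = \sigma - 2n$. Since $\mu \le 1$ under Assumption \ref{Assump: Separability}-a, we get $2n < \sigma(1-\mu)/2 \le \sigma/2 < \sigma$, so $\sigma_r(\wh{\G}) > 0$ and $\wh{\G}$ is invertible.

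For part \ref{Lemm: Check cond}-b, invertibility of $\wh{\G}$ lets me define $\wh{\k}_i = \wh{\G}^{-1}\wh{\b}_i$ uniquely. Using $\wh{\b}_i = \wh{\G}\k_i + \r_i$ from (\ref{Eq: HatB}), this rewrites as $\wh{\k}_i = \k_i + \wh{\G}^{-1}\r_i$. The triangle inequality then gives
\begin{equation*}
||\wh{\k}_i||_2 \le ||\k_i||_2 + \frac{||\r_i||_2}{\sigma_r(\wh{\G})} \le \mu + \frac{2n(\mu+1)}{\sigma-2n},
\end{equation*}
where the second inequality uses the definition of $\mu(\K)$, Lemma \ref{Lemm: Noise size}-b, and the bound on $\sigma_r(\wh{\G})$ from part a. It remains to verify that the right-hand side is strictly less than $1$; after clearing the denominator (which is positive by part a), this is equivalent to $2n(\mu+1) < (1-\mu)(\sigma-2n)$, or equivalently $4n < \sigma(1-\mu)$. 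But this is precisely the hypothesis $n < \epsilon$, so the conclusion follows.

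The only real obstacle is the bookkeeping: making sure the algebraic rearrangement at the end matches the prescribed threshold $\epsilon = \sigma(1-\mu)/4$ exactly, and that the chain of dependencies (Lemma \ref{Lemm: Noise size}-a feeding \ref{Lemm: Noise size}-b and \ref{Lemm: Noise size}-c, which in turn feed parts a and b of this lemma) is invoked in the correct order so that $\sigma - 2n$ remains positive when we divide by it. Once part a is in hand, part b is a direct perturbation estimate.
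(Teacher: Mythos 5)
Your proof is correct and follows essentially the same route as the paper's: part \ref{Lemm: Check cond}-a via the singular-value perturbation bound of Lemma \ref{Lemm: Noise size}-c, and part \ref{Lemm: Check cond}-b via the decomposition $\wh{\k}_i = \k_i + \wh{\G}^{-1}\r_i$ combined with Lemmas \ref{Lemm: Noise size}-b and \ref{Lemm: Noise size}-c, the only difference being that you spell out the denominator-clearing algebra that the paper leaves implicit. (One cosmetic remark: in part a, the chain $\sigma(1-\mu)/2 \le \sigma/2 < \sigma$ rests on $\mu \ge 0$ and on $\sigma > 0$ from Assumption \ref{Assump: Separability}-b, not on $\mu \le 1$.)
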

In the proof below, $n$ denotes $||\N||_2$. 
\begin{proof}
 \ref{Lemm: Check cond}-a)  \ 
 From Lemma \ref{Lemm: Noise size}-c, the minimum singular value 
 of $\wh{\G}$ satisfies
 \begin{eqnarray*}
  \sigma_r(\wh{\G}) 
  &\ge& \sigma - 2n \\
  &>& \sigma - 2\epsilon  
  = \frac{1}{2}\sigma(1 +  \mu) > 0. \\
 \end{eqnarray*}
 The final inequality follows 
 from $\sigma > 0$ due to Assumption \ref{Assump: Separability}-b.
 Hence, we have $\mbox{rank}(\wh{\G}) = r$.

 \ref{Lemm: Check cond}-b) \
 Let $\k_i$ and $\r_i$ be 
 the column vectors of $\K$ and $\R$, respectively.
 Then, we have $\wh{\b}_i = \wh{\G}\k_i + \r_i$.
 Since Lemma \ref{Lemm: Check cond}-a guarantees that 
 $\wh{\G}$ has an inverse, 
 it can be represented as $\wh{\b}_i = \wh{\G}\wh{\k}_i$ 
 by $\wh{\k}_i = \k_i + \wh{\G}^{-1}\r_i$.
 It follows from Lemmas \ref{Lemm: Noise size}-b and \ref{Lemm: Noise size}-c
 that
 \begin{eqnarray*}
  ||\wh{\k}_i||_2  
   &\le& || \k_i ||_2 + || \wh{\G}^{-1}||_2 ||\r_i ||_2 \\
   &\le& \mu + \frac{2n (\mu + 1)}{\sigma -2n}.
 \end{eqnarray*}
 Since $n < \frac{1}{4}\sigma (1-\mu)$, we have $||\wh{\k}_i||_2 < 1$.
\end{proof}

The robustness of Algorithm \ref{Alg: ER}
for a noisy separable matrix
follows from the above discussion,
Corollary \ref{coro: active points of simplex under relaxed cond}, 
and Lemma \ref{Lemm: Check cond}.
\begin{theo}
 Let $\wt{\A}$ be a noisy separable matrix of the form $\A+\N$.
 Assume that Assumption \ref{Assump: Separability} holds 
 for the separable matrix $\A$ in $\wt{\A}$.
 Set $\epsilon = \frac{1}{4}\sigma(1-\mu)$ 
 where $\sigma = \sigma_r(\F)$ and $\mu = \mu(\K)$
 for the basis and weight matrices $\F$ and $\K$ of $\A$.
 If $||\N||_2 < \epsilon$,
 Algorithm \ref{Alg: ER} for $(\wt{\A}, \mbox{rank}(\A))$ 
 returns an index set $\IC$ such that  $||\wt{\A}(\IC) -\F||_2 < \epsilon$.
 \label{Theo: Robustness}
\end{theo}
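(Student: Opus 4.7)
The plan is to assemble the robustness theorem from the machinery already developed: Lemma \ref{Lemm: Check cond} verifies the hypotheses of Corollary \ref{coro: active points of simplex under relaxed cond}, which in turn pins down exactly which columns of $\P$ lie on the boundary of the MVEE, and then the permutation bookkeeping from equations (\ref{Eq: EquRep of P (a)})--(\ref{Eq: EquRep of P (g)}) tells us which columns of $\wt{\A}$ those active indices point to.

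First, under the hypothesis $\|\N\|_2 < \epsilon = \tfrac{1}{4}\sigma(1-\mu)$, I would invoke Lemma \ref{Lemm: Check cond}-a to conclude that $\wh{\G}$ in (\ref{Eq: HatG}) has full column rank $r$, and Lemma \ref{Lemm: Check cond}-b to express each $\wh{\b}_i$ as $\wh{\G}\wh{\k}_i$ with $\|\wh{\k}_i\|_2 < 1$. These are precisely the hypotheses of Corollary \ref{coro: active points of simplex under relaxed cond}, applied to the matrix $\wh{\G}$ in place of $\G$ and with the perturbed coefficient vectors $\wh{\k}_i$ in place of the $\k_i$. Thus the active index set of $\Primal(\SC)$ constructed in Step 2 of Algorithm \ref{Alg: ER} is exactly the set of indices whose column in $\P$ equals one of the $\wh{\g}_j$'s.

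Next I would trace this index set back to $\wt{\A}$ using the chain (\ref{Eq: EquRep of P (a)})--(\ref{Eq: EquRep of P (g)}): column $i$ of $\P$ corresponds, under $\U$ and the permutation $\Pib$, to either a column of $\wh{\F}=\F+\wb{\N}^{(1)}$ or a column of $\wh{\F}\K+\wh{\N}$. Comparing this with the representation $\wt{\A}=(\wt{\F},\wt{\F}\K+\wt{\N})\Pib$ with $\wt{\F}=\F+\N^{(1)}$, the indices of the columns of $\wh{\G}$ inside $\P$ are the same as the indices of $\wt{\F}$ inside $\wt{\A}$. Therefore the returned $\IC$ satisfies $\wt{\A}(\IC)=\wt{\F}=\F+\N^{(1)}$.

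The final step is just a norm bound: since $\N^{(1)}$ is a column-submatrix of $\N$ (after the permutation $\Pib^{-1}$), we have $\|\N^{(1)}\|_2\le \|\N\|_2 < \epsilon$, so
\begin{equation*}
\|\wt{\A}(\IC)-\F\|_2 \;=\; \|\N^{(1)}\|_2 \;\le\; \|\N\|_2 \;<\; \epsilon,
\end{equation*}
which is the claim. The substantive work has really been done in Lemmas \ref{Lemm: Singular value of perturbed A}--\ref{Lemm: Check cond}; the only step requiring care here is the identification of $\IC$ with the columns of $\wt{\F}$ through the orthogonal rotation by $\U^\top$ and the truncation to $\wt{\A}^r$, which I would make explicit by reading off (\ref{Eq: HatG}) and (\ref{Eq: EquRep of P (g)}). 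Once that is in place, the theorem follows in one line.
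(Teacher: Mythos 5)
Your proposal is correct and follows the paper's own route exactly: the paper proves Theorem \ref{Theo: Robustness} by combining Lemma \ref{Lemm: Check cond} (to verify the hypotheses of Corollary \ref{coro: active points of simplex under relaxed cond}), that corollary (to identify the active points of $\Primal(\SC)$ with the columns of $\wh{\G}$), and the index bookkeeping in (\ref{Eq: EquRep of P (a)})--(\ref{Eq: EquRep of P (g)}) showing $\wt{\A}(\IC)=\F+\N^{(1)}$, after which $\|\N^{(1)}\|_2\le\|\N\|_2<\epsilon$ gives the bound. You have merely written out in full the chain the paper compresses into one sentence.
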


In Theorem \ref{Theo: Robustness}, let $\F^* = \wt{\A}(\IC)$, 
and $\W^*$ be an optimal solution of the convex optimization problem,
\begin{equation*}
 \mbox{minimize} \ ||\wt{\A}(\IC)\X - \wt{\A}||_F^2 \ \mbox{subject to} \ \X \ge \0,
\end{equation*}
where the matrix $\X$ of size $r$-by-$m$ is the decision variable.
Then, $(\F^*, \W^*)$ serves as the NMF factor of $\wt{\A}$.
It is possible to evaluate 
the residual error of this factorization
in a similar way to the proof of Theorem 4 in \cite{Gil13}.
\begin{coro}
 Let $\w_i^*$  and $\wt{\a}_i$ be the column vectors of $\W^*$ and $\wt{\A}$, respectively.
 Then,  $||\F^* \w_i^* - \wt{\a}_i||_2 < 2 \epsilon$ for $i=1,\ldots,m$.
\end{coro}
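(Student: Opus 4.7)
The plan is to exploit the column-separability of the nonnegative least-squares problem defining $\W^*$ and then use the weight matrix $\W$ of the underlying separable matrix $\A$ as a convenient feasible comparator. First I observe that the Frobenius-norm problem that defines $\W^*$ decomposes into $m$ independent per-column problems, so $\w^*_i$ solves
\begin{equation*}
\min_{\x \ge \0}\ \|\F^*\x - \wt{\a}_i\|_2^2.
\end{equation*}
Since the original weight vector $\w_i$, i.e.\ the $i$th column of $\W=(\I,\K)\Pib$, is nonnegative by construction, it is feasible for this per-column problem, and hence
\begin{equation*}
\|\F^* \w^*_i - \wt{\a}_i\|_2 \ \le\  \|\F^* \w_i - \wt{\a}_i\|_2.
\end{equation*}

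Next I would insert $\F\w_i=\a_i$ inside the norm on the right and apply the triangle inequality, together with the identity $\wt{\a}_i - \a_i = \n_i$ (the $i$th column of $\N$):
\begin{equation*}
\|\F^* \w_i - \wt{\a}_i\|_2 \ \le\  \|(\F^* - \F)\w_i\|_2 + \|\a_i - \wt{\a}_i\|_2 \ \le\  \|\F^* - \F\|_2\,\|\w_i\|_2 + \|\n_i\|_2.
\end{equation*}

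To finish, I bound each factor. Assumption \ref{Assump: Separability}-a gives $\|\w_i\|_1 = 1$, hence $\|\w_i\|_2 \le 1$. The hypothesis of Theorem \ref{Theo: Robustness}, namely $\|\N\|_2 < \epsilon$, yields $\|\n_i\|_2 \le \|\N\|_2 < \epsilon$, since the $2$-norm of any column is dominated by the spectral norm. Finally, Theorem \ref{Theo: Robustness} itself supplies $\|\F^* - \F\|_2 = \|\wt{\A}(\IC) - \F\|_2 < \epsilon$. Combining the three estimates gives $\|\F^* \w^*_i - \wt{\a}_i\|_2 < \epsilon\cdot 1 + \epsilon = 2\epsilon$, which is the required bound.

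I do not anticipate a real obstacle here: the argument is a short triangle-inequality chain that leverages the robustness bound already established in Theorem \ref{Theo: Robustness} and the unit $\ell_1$-norm of the columns of $\W$ from Assumption \ref{Assump: Separability}-a. The only point requiring mild care is the separability of the NNLS objective across columns, which justifies substituting the surrogate $\w_i$ for $\w^*_i$ as a feasible competitor; this is immediate since both the constraint $\X\ge\0$ and the squared Frobenius objective act columnwise.
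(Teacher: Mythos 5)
Your proposal is correct and follows essentially the same argument as the paper: compare $\w_i^*$ against the feasible point $\w_i$ in the columnwise NNLS problem, insert $\F\w_i=\a_i$ and $\wt{\a}_i=\a_i+\n_i$, and bound the two resulting terms by $\epsilon$ each using Theorem \ref{Theo: Robustness} and $\|\w_i\|_2\le\|\w_i\|_1=1$. Your explicit justification of the feasibility/optimality step is a minor elaboration of what the paper leaves implicit, not a different route.
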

\begin{proof}
 From Assumption \ref{Assump: Separability}-a,
 the column vectors $\w_i$ of $\W $ satisfy
 $||\w_i||_2 \le 1$ for $i=1, \ldots, m$.
 Therefore, for $i=1, \ldots, m$, 
 \begin{eqnarray*}
  || \F^* \w^*_i - \wt{\a}_i ||_2 
  &\le& || \F^* \w_i - \wt{\a}_i ||_2  \\
  &=& || \F^* \w_i  - \F\w_i + \F\w_i  - \a_i - \n_i  ||_2 \\
  &=& || (\F^* - \F) \w_i - \n_i  ||_2  \\
  &\le& ||\F^* - \F||_2 || \w_i ||_2 + || \n_i||_2  < 2 \epsilon, 
 \end{eqnarray*}
 where $\a_i$ and $\n_i$ denote the $i$th column vector of $\A$ and $\N$, respectively.
\end{proof}

\section{Implementation in Practice} 
\label{Sec: Implementation}
Theorem \ref{Theo: Robustness} guarantees that 
Algorithm \ref{Alg: ER} correctly identifies
the near-basis matrix of a noisy separable matrix
if the noise is smaller than some level.
But in the NMFs of matrices arising from practical applications,
it seems that the noise level  
would likely exceed the level
for which the theorem is valid.
In such a situation, 
the algorithm might generate more active points than hoped.
Therefore, we need to add a selection step in which 
$r$ points are selected from the active points.
Also, the number of active points  depends on 
which dimension we choose in the computation of the reduced matrix $\P$.
Algorithm \ref{Alg: ER} computes the reduced matrix $\P$ of the data matrix
and draws an origin-centered MVEE for the column vectors $\p_1, \ldots, \p_m$ of $\P$.
As we will see in Lemma \ref{Lemm: Number of active points},
the number of active points depends on the dimension of $\p_1, \ldots, \p_m$.
Therefore, we introduce an input parameter $\rho$ to control the dimension. 
By taking account of these considerations, 
we design a practical implementation of Algorithm \ref{Alg: ER}.

\begin{algorithm}
 \noindent 
 \caption{Practical Implementation of Algorithm \ref{Alg: ER}}
 \label{Alg: Practical ER}
 \textbf{Input:} $\M \in \Real^{d \times m}_+, r \in \Natural$, 
 and $\rho \in \Natural$. \\
 \textbf{Output:} $\IC$.
 \begin{enumerate}
  \item[\textbf{1:}] 
	       Run Algorithm \ref{Alg: ER} for $(\M, \rho)$.
	       Let $\JC$ be the index set returned by the algorithm.

  \item[\textbf{2:}]
	       If $|\JC| < r$, increase $\rho$ by 1 and go back to Step 1.
	       Otherwise, select $r$ elements from $\JC$
	       and construct the set $\IC$ of these elements.

 \end{enumerate}
\end{algorithm}

One may wonder whether Algorithm \ref{Alg: Practical ER} infinitely loops or not.
In fact, we can show that under some conditions, 
infinite loops do not occur.

\begin{lemm} \label{Lemm: Number of active points}
 For  $\p_1, \ldots, \p_m \in \Real^\rho$, 
 let $\SC = \{\pm \p_1, \ldots, \pm \p_m\}$.
 Suppose that Assumption \ref{Assump: MVEE} holds.
 Then,  $\Primal(\SC)$ has at least $\rho$ active points.
\end{lemm}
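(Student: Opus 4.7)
The plan is to derive the lower bound from the KKT system (\ref{Eq: KKT}) combined with the fact that the optimal dual solution must realize a positive definite combination of rank-one outer products. The starting observation is that Assumption \ref{Assump: MVEE} (i.e., $\mbox{rank}(\P) = \rho$) supplies a strictly feasible dual point: for example, $\u = (1/m)\e$ gives $\Omega(\u) = (1/m)\P\P^\top \succ \0$; likewise the primal is strictly feasible via $\L = \varepsilon\I$ with $\varepsilon$ sufficiently small. Slater's condition therefore yields strong duality and the existence of a primal--dual pair $(\L^*, \z^*)$ with $\L^* \succ \0$ and $\z^* \ge \0$ satisfying (\ref{Eq: KKT}).

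From here the argument proceeds in two short steps. First, I use complementary slackness (\ref{Eq2: KKT}) to observe that any index $i$ with $z_i^* > 0$ corresponds to an active point of $\Primal(\SC)$, because $\langle \p_i\p_i^\top, \L^*\rangle = 1$ must hold at such $i$. This reduces the lemma to lower-bounding the cardinality of the support $\{i : z_i^* > 0\}$.

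Second, I invoke the stationarity condition (\ref{Eq1: KKT}), which yields $\Omega(\z^*) = (\L^*)^{-1} \succ \0$. Writing
\begin{equation*}
\Omega(\z^*) \;=\; \sum_{i \,:\, z_i^* > 0} z_i^* \, \p_i \p_i^\top,
\end{equation*}
positive definiteness forces the vectors $\{\p_i : z_i^* > 0\}$ to span $\Real^\rho$; otherwise any nonzero vector orthogonal to their span would lie in the kernel of $\Omega(\z^*)$, a contradiction. This immediately gives $|\{i : z_i^* > 0\}| \ge \rho$, and combined with the first step, $\Primal(\SC)$ has at least $\rho$ active points.

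I do not expect any substantial obstacle. The only delicate point is the appeal to strong duality and the existence of the Lagrange multipliers $\z^*$, but this is exactly what Assumption \ref{Assump: MVEE} guarantees through the strict dual feasibility of $\u = (1/m)\e$; the remaining steps amount to elementary linear algebra applied to the KKT conditions already displayed in the paper.
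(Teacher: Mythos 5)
Your proposal is correct and follows essentially the same route as the paper's own proof: stationarity (\ref{Eq1: KKT}) forces $\Omega(\z^*)$ to be nonsingular, hence at least $\rho$ multipliers are positive, and complementary slackness (\ref{Eq2: KKT}) converts these into active points. You simply make explicit two details the paper leaves implicit — the existence of the KKT pair via Slater's condition and the spanning argument behind the count of nonzero $z_i^*$ — which is a welcome but not substantively different elaboration.
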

\begin{proof}
 Consider the KKT conditions (\ref{Eq: KKT}) for  $\Primal(\SC)$.
 Condition (\ref{Eq1: KKT}) requires 
 $\Omega(\z^*)$ to be nonsingular.
 Since $\mbox{rank}(\P) = \rho$ from the assumption,
 at least $\rho$ nonzero $z_i^*$ exist.
 Therefore,  we see from (\ref{Eq2: KKT})
 that $\Primal(\SC)$ has at least $\rho$ active points.
\end{proof}

\begin{prop} \label{Prop: Finite number of iterations}
Suppose that  we choose $r$ such that $r \le \mbox{rank}(\M)$.
 Then, Algorithm \ref{Alg: Practical ER} 
 terminates after a finite number of iterations.
\end{prop}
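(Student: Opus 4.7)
The plan is to combine a simple counting argument on $\rho$ with Lemma \ref{Lemm: Number of active points} to argue that by the iteration at which $\rho$ first equals $r$, the inner call to Algorithm \ref{Alg: ER} must return an index set $\JC$ with $|\JC| \ge r$, which then forces Algorithm \ref{Alg: Practical ER} to terminate in Step 2.

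First, I would track the evolution of $\rho$ across iterations of Algorithm \ref{Alg: Practical ER}. At each pass through Step 2, the algorithm either exits (when $|\JC| \ge r$) or increments $\rho$ by one. Thus, if $\rho_0$ denotes the input value of $\rho$, then after at most $\max\{0, r - \rho_0\}$ iterations the parameter reaches the value $r$. It therefore suffices to show that, once $\rho = r$, the inner call to Algorithm \ref{Alg: ER} necessarily returns an index set of cardinality at least $r$.

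The core step is to verify Assumption \ref{Assump: MVEE} for the reduced matrix $\P \in \Real^{r \times m}$ of $\M$ associated with $r$. By the construction in Step 1 of Algorithm \ref{Alg: ER}, $\mathrm{rank}(\P) = \mathrm{rank}(\M^r)$, where $\M^r = \U \Sigma^r \V^\top$ is the best rank-$r$ approximation of $\M$. Under the hypothesis $r \le \mathrm{rank}(\M)$, the top $r$ singular values $\sigma_1, \ldots, \sigma_r$ of $\M$ are strictly positive, so the truncated diagonal matrix $\Sigma^r = \mathrm{diag}(\sigma_1, \ldots, \sigma_r, 0, \ldots, 0)$ has rank $r$; consequently $\mathrm{rank}(\P) = r$, and Assumption \ref{Assump: MVEE} holds for the set $\SC = \{\pm \p_1, \ldots, \pm \p_m\}$ built from the columns of $\P$. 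Lemma \ref{Lemm: Number of active points} then guarantees that $\Primal(\SC)$ has at least $r$ active indices, so $|\JC| \ge r$ and the algorithm exits.

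The remainder is bookkeeping, and no substantive obstacle is expected: the single point requiring care is the rank identification for $\P$, which is secured directly by the hypothesis $r \le \mathrm{rank}(\M)$ together with the fact that truncating the SVD to the top $r$ strictly positive singular values preserves rank.
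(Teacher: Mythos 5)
Your proof is correct and takes essentially the same route as the paper's: Lemma \ref{Lemm: Number of active points} yields $|\JC| \ge \rho$, and since $\rho$ is incremented until it reaches $r \le \mathrm{rank}(\M)$, the loop must exit after finitely many iterations. Your explicit verification of Assumption \ref{Assump: MVEE} for the reduced matrix (via the strict positivity of the top $r$ singular values of $\M$) is a detail the paper leaves implicit, but it is the same argument.
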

\begin{proof}
For the active index set $\JC$ constructed in Step 1, 
Lemma \ref{Lemm: Number of active points} guarantees that 
$|\JC| \ge \rho$.
The parameter $\rho$
increases  by 1 if $|\JC| < r$ in Step 2 
and can continue to increase up to  $\rho = \mbox{rank}(\M)$.
Since $r \le \mbox{rank}(\M)$, 
it is necessarily to satisfy $|\JC| \ge \rho \ge r$ 
 after a finite number of iterations.
\end{proof}

Proposition \ref{Prop: Finite number of iterations} 
implies that $\rho$ may not be an essential input parameter
since  Algorithm \ref{Alg: Practical ER} always terminates 
under $r \le \mbox{rank}(\M)$ even if starting with  $\rho=1$.

There are some concerns about Algorithm \ref{Alg: Practical ER}.
One is in how to select $r$ elements from an active index set $\JC$ in Step 2.
It is possible to have various ways to make the selection.
We rely on existing algorithms, such as XRAY and SPA, 
and perform these existing algorithms for $(\M(\JC), \rho)$.
Thus, Algorithm \ref{Alg: ER} can be regarded as a preprocessor 
which filters out basis vector candidates from the data points 
and enhance the performance of existing algorithms.
Another concern is in the computational cost of solving $\Primal$.
In the next section, we describe a cutting plane strategy 
for efficiently performing an interior-point algorithm.

\subsection{Cutting Plane Strategy for Solving $\Primal$}
\label{Subsec: Cutting plane}

Let $\SC$ be a set of $m$ points in $\Real^d$.
As mentioned in Section \ref{Sec: MVEE}, 
$O(m^3)$ arithmetic operations are required in
each iteration of an interior-point algorithm for $\Primal(\SC)$.
A cutting plane strategy is a way to reduce the number of points 
which we need to deal with in solving $\Primal(\SC)$.
The strategy was originally used in \cite{Sun04}. 
In this section, we describe the details of our implementation.

The cutting plane strategy for solving $\Primal$ has a geometric interpretation.
It is thought of that 
active points contribute a lot to the drawing 
the MVEE for a set of points
but inactive points make less of a contribution.
This geometric intuition can be justified by the following proposition.
Let $\L$ be a $d$-by-$d$ matrix.
We use the notation $\delta_{\LS}(\p)$ 
to denote $\langle \p \p^\top, \L \rangle$ for an element $\p \in \Real^d$ of $\SC$.
\begin{prop}
Let $\bar{\SC}$ be a subset of $\SC$.
If an optimal solution $\bar{\L}^*$  of $\Primal(\bar{\SC})$ satisfies 
$\delta_{\bar{\LS}^*}(\p) \le 1$ for all $p \in \SC \setminus \bar{\SC}$,
then $\bar{\L}^*$ is an optimal solution of $\Primal(\SC)$.
\end{prop}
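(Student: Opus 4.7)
The plan is to run the standard relaxation argument for cutting plane methods. The key observation is that $\Primal(\bar{\SC})$ is a relaxation of $\Primal(\SC)$ in the sense that it has strictly fewer (or equal) inequality constraints, since $\bar{\SC} \subseteq \SC$ implies that each constraint $\langle \p \p^\top, \L \rangle \le 1$ in $\Primal(\bar{\SC})$ also appears in $\Primal(\SC)$. Hence every $\L$ feasible for $\Primal(\SC)$ is feasible for $\Primal(\bar{\SC})$, and the optimal value of the relaxed problem is no larger than that of the full problem (both minimize the same objective $-\log \det \L$).

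The steps I would carry out are as follows. First, I would note the inclusion of feasible regions to conclude
\begin{equation*}
 -\log \det \bar{\L}^* \;\le\; -\log \det \L^*,
\end{equation*}
where $\L^*$ denotes any optimal solution of $\Primal(\SC)$. Second, I would use the hypothesis $\delta_{\bar{\LS}^*}(\p) \le 1$ for all $\p \in \SC \setminus \bar{\SC}$, together with the fact that $\bar{\L}^*$ already satisfies $\delta_{\bar{\LS}^*}(\p) \le 1$ for $\p \in \bar{\SC}$ and $\bar{\L}^* \succ \0$ by feasibility in $\Primal(\bar{\SC})$, to conclude that $\bar{\L}^*$ is feasible for $\Primal(\SC)$. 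Third, feasibility of $\bar{\L}^*$ in $\Primal(\SC)$ gives the reverse inequality $-\log \det \L^* \le -\log \det \bar{\L}^*$, and combining the two yields equality of objective values. Since $\bar{\L}^*$ is feasible for $\Primal(\SC)$ and attains its optimal value, $\bar{\L}^*$ is itself an optimal solution of $\Primal(\SC)$.

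There is no real obstacle here; the argument is a one-line relaxation/feasibility sandwich. The only point that merits a brief remark is that one does not need to invoke the KKT conditions \eqref{Eq: KKT} explicitly, although one could alternatively verify optimality by extending the dual multipliers of $\Primal(\bar{\SC})$ by zeros on $\SC \setminus \bar{\SC}$ and checking that all KKT conditions for $\Primal(\SC)$ are met; the hypothesis $\delta_{\bar{\LS}^*}(\p)\le 1$ is precisely what is needed to preserve primal feasibility \eqref{Eq3: KKT}, while the extended multiplier vector trivially preserves complementarity \eqref{Eq2: KKT} and nonnegativity \eqref{Eq5: KKT}. Either route produces the proposition in a couple of lines.
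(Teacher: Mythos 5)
Your proof is correct, and it is exactly the standard relaxation/feasibility argument the paper has in mind: the paper explicitly omits the proof as obvious, so there is no alternative route to compare against. One cosmetic point: rather than invoking ``any optimal solution $\L^*$ of $\Primal(\SC)$'' (which presupposes existence), it is marginally cleaner to note that $-\log\det\bar{\L}^* \le -\log\det\L$ for \emph{every} $\L$ feasible for $\Primal(\SC)$ and then conclude directly from the feasibility of $\bar{\L}^*$; the substance is unchanged.
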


The proof is omitted since it is obvious. 
The proposition implies that 
$\Primal(\SC)$ can be solved 
by using its subset $\bar{\SC}$ instead of $\SC$.
The cutting plane strategy offers a way of finding such a $\bar{\SC}$, 
in which a smaller  problem $\Primal(\bar{\SC})$ 
has the same optimal solution as $\Primal(\SC)$.
In this strategy, we first choose some points from $\SC$ and construct 
a set $\SC^1$ containing these points.
Let $\SC^k$ be the set constructed in the $k$th iteration.
In the $(k+1)$th iteration, we choose some points from
$\SC \setminus \SC^k$ and expand 
$\SC^k$ to $\SC^{k+1}$ by adding these points to $\SC^k$.
Besides expanding, 
we also shrink  $\SC^k$ by 
discarding some points which can be 
regarded as useless for drawing the origin-centered MVEE.
These expanding and shrinking phases play an important role 
in constructing a small set.
Algorithm \ref{Alg: Cutting plane} describes 
a cutting plane strategy for solving $\Primal(\SC)$.

\begin{algorithm}
 \noindent 
 \caption{Cutting Plane Strategy for Solving $\Primal(\SC)$} 
 \label{Alg: Cutting plane}
 \textbf{Input:}  $\SC = \{\p_1, \ldots, \p_m \}$. \\
 \textbf{Output:} $\L^*$.
 \begin{enumerate}
  \item[\textbf{1:}] 
	       Choose an initial set $\SC^1$ from $\SC$ and let $k=1$.

  \item[\textbf{2:}] 
	       Solve $\Primal(\SC^k)$ and find the optimal solution $\L^k$.
	       If  $\delta_{\LS^k}(\p) \le 1$ holds for 
	       all $\p \in \SC \setminus \SC^k$,  let $\L^* =\L^k$, and stop.
	       
  \item[\textbf{3:}]
	       Choose a subset $\FC$ of $\SC^k$ and  
	       a subset $\GC$ of 
	       $\{\p \in \SC \setminus \SC^k : \delta_{\LS^k}(\p) > 1 \}$.
	       Update $\SC^k$ as $\SC^{k+1} = (\SC^k \setminus \FC) \cup \GC$ 
	       and increase $k$ by $1$.
	       Then, go back to Step 2.

 \end{enumerate}
\end{algorithm}

Now, we give a description of our implementation of Algorithm \ref{Alg: Cutting plane}.
To construct the initial set $\SC^1$ in Step 1,
our implementation employs the algorithm used in \cite{Kum05, Tod07, Ahi08}.
The algorithm constructs a set $\SC^1$ by greedily choosing $2d$ points 
in a step-by-step manner
such that the convex hull is a $d$-dimensional crosspolytope 
containing as many points in $\SC$ as possible.
We refer the reader to  Algorithm 3.1 of \cite{Kum05} 
for the precise description.

To shrink and expand $\SC^k$ in Step 3,
we use a shrinking threshold parameter $\theta$ such that $\theta < 1$,
and an expanding size parameter $\eta$ such that $\eta \ge 1$.
These parameters are set before running Algorithm \ref{Alg: Cutting plane}.
For shrinking, 
we construct $\FC = \{\p \in \SC^k  : \delta_{\LS^k}(\p) \le \theta \}$ 
by using $\theta$.
For expanding, 
we arrange the points of $\{\p \in \SC \setminus \SC^k : \delta_{\LS^k}(\p) > 1 \}$ 
in descending order, as measured by  $\delta_{\LS^k}(\cdot)$,
and construct $\GC$ by choosing the top $(m-2d) / \eta$ points.
If the set $\{\p \in \SC \setminus \SC^k : \delta_{\LS^k}(\p) > 1 \}$ has less 
than $(m-2d) / \eta$ points, 
we choose all the points and construct $\GC$.

\section{Experiments} 
\label{Sec: Experiment}

We experimentally 
compared Algorithm \ref{Alg: Practical ER} with SPA and the variants of XRAY.
These two existing algorithms were chosen 
because their studies \cite{Bit12, Gil13b, Kum13} report that 
they outperform AGKM and Hottopixx, and scale to the problem size.
Two types of experiments were conducted:
one is the evaluation for the robustness 
of the algorithms to noise on synthetic data sets,
and the other is the application of the algorithms 
to clustering of real-world document corpora.

We implemented Algorithm \ref{Alg: Practical ER}, 
and  three variants of XRAY, ``max'', ``dist'' and ``greedy'', in MATLAB.
We put Algorithm \ref{Alg: Cutting plane} 
in Algorithm \ref{Alg: Practical ER} so it would
solve  $\Primal$ efficiently.
The software package SDPT3 \cite{Toh99b} was used for solving $\Primal(\SC^k)$ 
in Step 2 of Algorithm \ref{Alg: Cutting plane}.
The shrinking parameter $\theta$ 
and expanding size parameter $\eta$ were 
set as $0.9999$ and $5$, respectively.
The implementation of XRAY formulated the computation of 
the residual matrix  $\R = \A(\IC_k)\X^* - \A$ as a convex optimization problem,
\begin{equation*}
\X^* = \arg \min_{\XS \ge \zeros}||\A(\IC_k)\X - \A||_F^2.
\end{equation*}
For the implementation of SPA, 
we used code from the first author's website \cite{Gil13}.
Note that
SPA and XRAY are sensitive to 
the normalization of the column vectors of the data matrix
(\cite{Kum13}), and for this reason, 
we used a data matrix 
whose column vectors were not normalized.
All experiments were done in MATLAB on a 3.2 GHz CPU processor and 12 GB memory.

We will use the following abbreviations to represent the variants of algorithms.
For instance, 
Algorithm \ref{Alg: Practical ER} with SPA for an index selection of 
Step 2 is referred to as ER-SPA.
Also, the variant of XRAY with ``max'' selection policy
is referred to as XRAY(max).

\subsection{Synthetic Data} \label{Subsec: Artificial data}
Experiments were conducted for 
the purpose of seeing how well Algorithm \ref{Alg: Practical ER} could improve 
the robustness of SPA and XRAY to noise.
Specifically, 
we compared it with SPA, XRAY(max), XRAY(dist), and XRAY(greedy).
The robustness of algorithm was measured by a recovery rate.
Let $\IC$ be an index set of basis vectors in a noisy separable matrix,
and $\IC^*$ be an index set returned by an algorithm.
The recovery rate is the ratio given by $|\IC \cap \IC^*| \ / \ |\IC|$.

We used synthetic data sets of the form $\F(\I, \K)\Pib + \N$ 
with $d=250$, $m=5,000$, and $r=10$.
The matrices $\F, \K, \Pib$ and $\N$ were synthetically generated as follows.
The entries of $\W \in \Real^{d \times r}_+$ 
were drawn from a uniform distribution on the interval $[0,1]$.
The column vectors of $\K \in \Real^{r \times \ell}_+$ were from a Dirichlet distribution
whose $r$ parameters were uniformly  from the interval $[0,1]$.
The permutation matrix $\Pib$ was randomly generated.
The entries of the noise matrix $\N \in \Real^{d \times m}$ were from
a normal distribution with mean $0$ and standard deviation $\delta$.
The parameter $\delta$ determined 
the intensity of the noise, and 
it was chosen from $0$ to $0.5$ in $0.01$ increments.
A single data set consisted of 51 matrices with various amounts of noise,
and we made 50 different data sets.
Algorithm \ref{Alg: Practical ER} was performed in the setting that 
$\M$ is a matrix in the data set and $r$ and $\rho$ are each $10$.

\begin{figure}[p]
 \begin{center}
\begin{minipage}{.47\linewidth}
 \includegraphics[width=\linewidth]{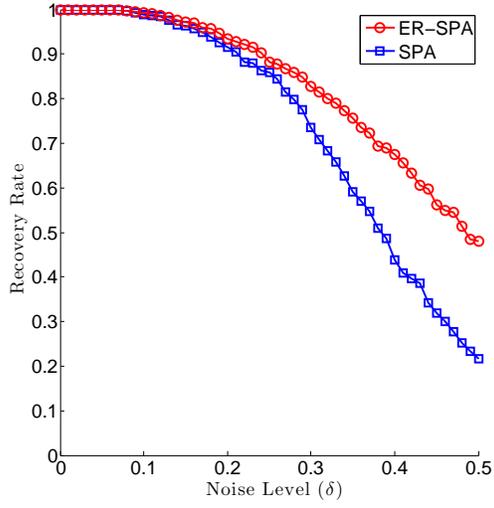}
\end{minipage}
\begin{minipage}{.47\linewidth}
 \includegraphics[width=\linewidth]{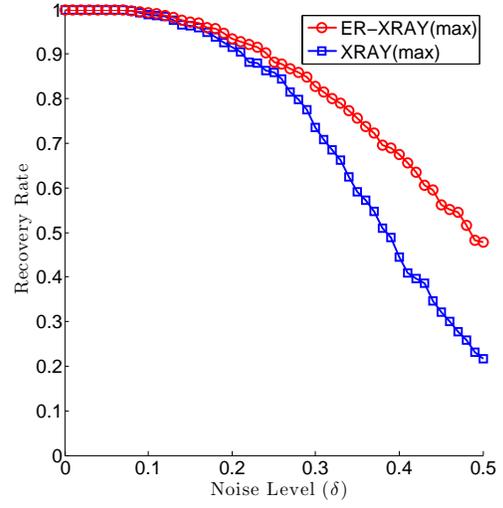}
\end{minipage}
\begin{minipage}{.47\linewidth}
 \includegraphics[width=\linewidth]{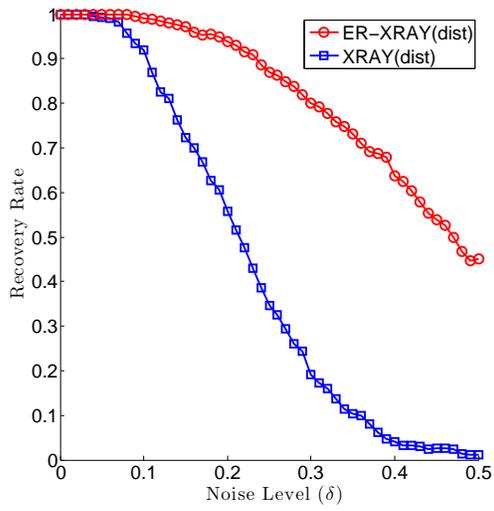}
\end{minipage}
\begin{minipage}{.47\linewidth}
 \includegraphics[width=\linewidth]{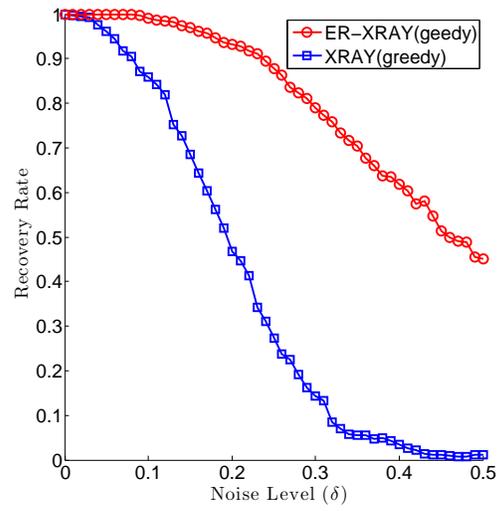}
\end{minipage}
\end{center}
\caption{Comparison of the recovery rates of Algorithm \ref{Alg: Practical ER}
 with SPA and XRAY.}
 \label{Fig: Cmp}
\end{figure}

\begin{table}[h]
 \centering
 \caption{Maximum values of noise level $\delta$ for different recovery rates in percentage.}
  \label{Tab: Max noise level}
 \begin{tabular}{|r|cccc|}
  \hline
   Recovery rate    & 100\% & 90\% & 80\% & 70\% \\
  \hline 
  ER-SPA          &  0.06 & 0.24 & 0.32  & 0.37 \\
  SPA             &  0.05 & 0.21 & 0.27  & 0.31 \\
  \hline
  ER-XRAY(max)    &  0.06 & 0.24 & 0.32 & 0.37 \\
  XRAY(max)       &  0.05 & 0.21 & 0.27 & 0.31\\
  \hline
  ER-XRAY(dist)   &  0.07 & 0.23 & 0.29 & 0.36 \\
  XRAY(dist)      &  0.03 & 0.10 & 0.13 & 0.16 \\
  \hline
  ER-XRAY(greedy) & 0.07  & 0.23 & 0.29 & 0.35 \\
  XRAY(greedy)    & 0.00  & 0.08 & 0.12 & 0.14 \\
  \hline
 \end{tabular}
\end{table}

Figure \ref{Fig: Cmp} depicts the average recovery rate 
on the 50 data sets for Algorithm \ref{Alg: Practical ER}, SPA and XRAY.
Table \ref{Tab: Max noise level} summarizes the maximum values of
noise level $\delta$ for different recovery rates in percentage.
The noise level was measured by $0.01$, 
and hence, for instance, the entry ``$0.00$'' at XRAY(greedy) for 100\% recovery rate 
means that the maximum value is  in the interval $[0.00, 0.01)$.
We see from the figure that Algorithm \ref{Alg: Practical ER}  improved
the recovery rates of the existing algorithms.
In particular, 
the recovery rates of XRAY(dist) and XRAY(greedy)
rapidly decrease as the noise level increases,
but Algorithm \ref{Alg: Practical ER} significantly improved them.
Also, the figure shows that
Algorithm \ref{Alg: Practical ER} tended 
to slow the decrease in the recovery rate.
We see from the table that Algorithm \ref{Alg: Practical ER} is more robust to noise 
than SPA and XRAY.

\begin{table}[h]
 \centering
 \caption{Average  number of active points and 
  elapsed time of Algorithm \ref{Alg: Practical ER}}
  \label{Tab: Stat of time and numActPts}
 \begin{tabular}{|c|c|cccc|}
  \hline
  $\delta$ &  Active points & \multicolumn{4}{c|}{Elapsed time (second)} \\
      &    & ER-SPA & ER-XRAY(max) & ER-XRAY(dist) & ER-XRAY(greedy) \\
  \hline
  0     & 10 & 1.05 & 1.07 & 1.07 & 1.07 \\
  0.25  & 12 & 3.08 & 3.10 & 3.10 & 3.10 \\
  0.5   & 23 & 4.70 & 4.71 & 4.71 & 4.71 \\
  \hline
 \end{tabular}
\end{table}

Table \ref{Tab: Stat of time and numActPts} summarizes 
the average number of active points and elapsed time for 50 data sets
taken by Algorithm \ref{Alg: Practical ER} with $\delta=0, 0.25$ and $0.5$.
We read from the table that
the elapsed time increases with the number of active points.
The average elapsed times of SPA, XRAY(max), XRAY(dist), and XRAY(greedy) 
was respectively $0.03$, $1.18$, $16.80$ and $15.85$ in seconds.
Therefore, we see that the elapsed time of Algorithm \ref{Alg: Practical ER} was
within a reasonable range.

\subsection{Application to Document Clustering}
\label{Subsec: Document clustering}

Consider a set of $d$ documents. 
Let $m$ be the total number of words appearing in the document set.
We represent the documents by a bag-of-words.
That is, the $i$th document is represented as an $m$-dimensional
vector $\a_i$, whose elements are the appearance frequencies of words in the document.
A document vector $\a_i$ can be assumed to be generated 
by a convex combination of several topic vectors $\w_1, \ldots \w_r$.
This type of generative model has been used in many papers, 
for instance, \cite{Xu03, Sha06, Aro12b, Aro13, Din13, Kum13}.

Let $\W$ be an $r$-by-$m$ topic matrix such that $\w_1^\top, \ldots, \w_r^\top$ 
are stacked from  top to bottom and are of the form $(\w_1; \ldots; \w_r)$.
The model allows us to write a document vector in the form
$\a_i^\top = \f_i^\top \W$ by using a coefficient vector $\f_{i} \in \Real^r$ 
such that $\e^\top \f_i = 1$ and $\f_i \ge \0$.
This  means that we have $\A = \F\W$ for 
a document-by-word matrix $\A = (\a_1; \ldots; \a_d) \in \Real^{d \times m}_+$, 
a coefficient matrix $\F = (\f_1; \ldots; \f_d) \in \Real^{d \times r}_+$, and 
a topic matrix $\W = (\w_1; \ldots; \w_r) \in \Real^{r \times m}_+$.
In the same way as is described in  \cite{Aro12b,Aro13, Din13, Kum13},
we assume that a document-by-word matrix $\A$ is separable.
This requires that $\W$ is of $(\I,\K)\Pib$, and it means that
each topic has an {\it anchor word}.
An anchor word is a word that 
is contained in one topic but not contained in the other topics.
If an anchor word is found, 
it suggests that the associated topic exists.

Algorithms for Problem \ref{Prob: SepNMF} can be used for clustering documents 
and finding topics for the above generative model.
The algorithms for a document-word matrix $\A$ return an index set $\IC$.
Let $\F = \A(\IC)$. 
The row vector elements $f_{i1}, \ldots, f_{ir}$ of $\F$
can be thought of as the contribution rate 
of topics $\w_1, \ldots, \w_r$ for generating a document $\a_i$.
The highest value $f_{ij^*}$ among the  elements
implies that the topic $\w_{j^*}$ 
contributes the most to the generation of document $\a_i$.
Hence, we assign document $\a_i$ to a cluster having the topic $\w_{j^*}$.
There is an alternative to using $\F$ for measuring
the contribution rates of the topics.
Step 1 of Algorithm \ref{Alg: ER} produces 
a rank-$r$ approximation matrix $\A^r$ to $\A$ as a by-product.
Let $\F' = \A^r(\IC)$, and use it as an alternative to $\F$.
We say that clustering with $\F$ is clustering with the original data matrix, 
and that clustering with $\F'$ is  clustering with a low-rank approximation data matrix.

Experiments were conducted in 
the purpose of investigating clustering performance of algorithms
and also checking whether meaningful topics could be extracted.
To investigate the clustering performance, we used only SPA 
since our experimental results implied that XRAY would underperform.
We assigned the values of the document-word matrix on the basis 
of the tf-idf weighting scheme, for which we refer the reader to \cite{Man08}, 
and normalized the row vectors to the unit 1-norm.

To evaluate the clustering performance, 
we measured the accuracy (AC) and normalized mutual information (NMI).
These measures are often used for this purpose (See, for instance, \cite{Xu03, Man08}).
Let $\Omega_1, \ldots, \Omega_r$ be the manually classified classes
and $\CC_1, \ldots, \CC_r$ be the clusters constructed by an algorithm.
Both $\Omega_i$ and $\CC_j$ are the subsets 
of the document set $\{\a_1, \ldots, \a_m\}$ such that 
each subset does not share any documents and the union of all subsets 
coincides with the document set.
AC is computed as follows.
First, compute the correspondence between 
classes $\Omega_1, \ldots, \Omega_r$
and clusters $\CC_1, \ldots, \CC_r$ 
such that the total number of common documents $\Omega_i \cap \CC_j$ is maximized.
This computation can be done by solving an assignment problem.
After that, rearrange the classes and clusters in the obtained order and compute 
\begin{equation*}
 \frac{1}{d} \sum_{k=1}^r |\Omega_k \cap \CC_k|.
\end{equation*}
This value is the AC for the clusters constructed by an algorithm.
NMI is computed as
\begin{equation*}
 \frac{I(\Omega, \CC)}{\frac{1}{2}  (E(\Omega) + E(\CC))}.
\end{equation*}
$I$ and $E$ denote the mutual information and entropy
for the class family $\Omega$ and cluster family $\CC$ where 
$\Omega = \{\Omega_1, \ldots, \Omega_r\}$ and $\CC = \{\CC_1, \ldots, \CC_r\}$.
We refer the reader to Section 16.3 of \cite{Man08} 
for the precise forms of $I$ and $E$.

Two document corpora were used for 
the clustering-performance evaluation:
Reuters-21578 and 20 Newsgroups.
These corpora are publicly available from 
the UCI Knowledge Discovery in Databases Archive
\footnote{\url{http://kdd.ics.uci.edu}}.
In particular, we used the data preprocessing 
of Deng Cai \footnote{\url{http://www.cad.zju.edu.cn/home/dengcai/}},
in which multiple classes are discarded.
The Reuters-21578 corpus consists of 21,578 documents 
appearing in the Reuters newswire in 1987, 
and these documents are manually classified into 135 classes.
The text corpus is reduced by the preprocessing to 8,293 documents in 65 classes.
Furthermore, we cut off classes with less than 5 documents.
The resulting corpus contains 8,258 documents with 18,931 words 
in 48 classes, and the sizes of the classes range from 5 to 3,713.
The 20 Newsgroups corpus consists of 18,846 documents with 26,213 words 
appearing in 20 different newsgroups. 
The size of each class is about 1,000.

We randomly picked some classes from the corpora
and evaluated the clustering performance 50 times.
Algorithm \ref{Alg: Practical ER} was performed
in the setting that 
$\M$ is a document-word matrix and 
$r$ and $\rho$ each are the number of classes.
In clustering with a low-rank approximation data matrix,
we used the rank-$r$ approximation matrix to a document-word matrix.

\begin{table}[h]

 \centering
  \caption{(Reuters-21578) Average AC and NMI of ER-SPA and SPA 
  with the original data matrix and low-rank approximation data matrix. }
  \label{Tab: Clustering for Reuters}
 \begin{tabular}{|c|cc|cc|cc|cc|}
  \hline     
  & \multicolumn{4}{c|}{AC} & \multicolumn{4}{c|}{NMI} \\
  \hline    
  & \multicolumn{2}{c|}{Original} & \multicolumn{2}{c|}{Low-rank approx.} 
  & \multicolumn{2}{c|}{Original} & \multicolumn{2}{c|}{Low-rank approx.} \\
   \# Classes  & ER-SPA & SPA & ER-SPA  & SPA & ER-SPA & SPA & ER-SPA  & SPA \\
  \hline
    6  & 0.605 & 0.586  & 0.658 & 0.636 
       & 0.407 & 0.397  & 0.532 & 0.466 \\

    8  & 0.534 & 0.539  & 0.583 & 0.581
       & 0.388 & 0.387  & 0.491 & 0.456 \\

    10 & 0.515 & 0.508  & 0.572 & 0.560
       & 0.406 & 0.393  & 0.511 & 0.475 \\

   12 & 0.482 & 0.467  & 0.532 & 0.522 
      & 0.399 & 0.388  & 0.492 & 0.469 \\

  \hline
 \end{tabular}

 \centering
  \caption{(20 Newsgroups) Average AC and NMI of ER-SPA and SPA 
  with the original data matrix and low-rank approximation data matrix.}
  \label{Tab: Clustering for 20 Newsgroups}
 \begin{tabular}{|c|cc|cc|cc|cc|}
  \hline     
  & \multicolumn{4}{c|}{AC} & \multicolumn{4}{c|}{NMI} \\
  \hline    
  & \multicolumn{2}{c|}{Original} & \multicolumn{2}{c|}{Low-rank approx.} 
  & \multicolumn{2}{c|}{Original} & \multicolumn{2}{c|}{Low-rank approx.} \\
   \# Classes  & ER-SPA & SPA & ER-SPA  & SPA & ER-SPA & SPA & ER-SPA  & SPA \\
  \hline
    6  & 0.441 & 0.350 & 0.652 & 0.508 
       & 0.314 & 0.237 & 0.573 & 0.411 \\

    8  & 0.391 & 0.313 & 0.612 & 0.474 
       & 0.306 & 0.242 & 0.555 & 0.415 \\

    10 & 0.356 & 0.278 & 0.559 & 0.439
       & 0.291 & 0.228 & 0.515 & 0.397 \\

   12 & 0.319 & 0.240 & 0.517 & 0.395 
      & 0.268 & 0.205 & 0.486 & 0.372  \\

  \hline
 \end{tabular}
\end{table}

Tables \ref{Tab: Clustering for Reuters} and \ref{Tab: Clustering for 20 Newsgroups}
show the results for Reuters-21578 and 20 Newsgroups, respectively.
They summarize the average ACs and NMIs of ER-SPA and SPA.
The column with ``\# Classes'' lists the number of classes we chose.
The columns labeled ``Original'' and ``Low-rank approx.''
are respectively the averages of the corresponding 
clustering measurements with the original data matrix 
and low-rank approximation data matrix.
The tables suggest that 
clustering with a low-rank approximation data matrix 
performed better than clustering with the original data matrix.
We see from Table \ref{Tab: Clustering for Reuters} that 
ER-SPA could achieve improvements in 
the AC and NMI of SPA on Reuters-21578 
when the clustering was done with a low-rank approximation data matrix.
Table \ref{Tab: Clustering for 20 Newsgroups} indicates that
ER-SPA outperformed SPA in AC and NMI on 20 Newsgroups.

Finally, 
we compared the topics obtained by ER-SPA and SPA.
We used the BBC corpus in \cite{Gre06}, which 
is publicly available from the website
\footnote{\url{http://mlg.ucd.ie/datasets/bbc.html}}.
The documents in the corpus have been 
subjected by preprocessed such as 
stemming, stop-word removal, and low word frequency filtering.
It consists of 2,225 documents with 9,636 words
that appeared on the BBC news website in 2004-2005.
The documents were news on 5 topics:
``business'', ``entertainment'', ``politics'', ``sport'' and ``tech''.

\begin{table}[h]
 \centering
  \caption{AC and NMI of ER-SPA and SPA 
  with low-rank approximation data matrix for BBC.}
  \label{Tab: Clustering for BBC}
 \begin{tabular}{|cc|cc|}
  \hline     
  \multicolumn{2}{|c|}{AC} & \multicolumn{2}{c|}{NMI} \\
  \hline
   ER-SPA & SPA  &  ER-SPA & SPA  \\
  \hline
    0.939 &  0.675    &  0.831  &  0.472 \\ 
  \hline
 \end{tabular}

  \centering
 \caption{Anchor words and top-5 frequent words in
  topics grouped by ER-SPA and SPA for BBC}
  \label{Tab: Topics for BBC}
  \begin{tabular}{|c|cccccc|}
  \hline
  & Anchor word & 1 & 2 &  3 & 4 & 5  \\
  \hline 
   ER-SPA & film & award & best & oscar & nomin & actor  \\
   SPA & film & award & best & oscar & nomin & star \\ 
   \hline
   ER-SPA & mobil & phone & user & softwar & microsoft & technolog \\
   SPA & mobil & phone & user & microsoft &  music & download \\
   \hline
   ER-SPA & bank & growth & economi & price & rate & oil \\
   SPA & bank & growth & economi & price & rate & oil \\
   \hline
   ER-SPA & game & plai & player & win & england & club \\
   SPA & fiat & sale & profit & euro &  japan & firm \\
   \hline
   ER-SPA & elect & labour & parti & blair & tori & tax \\
   SPA & blog & servic & peopl & site & firm & game \\
  \hline
  \end{tabular}
 \end{table}

Table \ref{Tab: Clustering for BBC} shows the ACs and NMIs 
of ER-SPA and SPA on the low-rank approximation data matrix 
for the BBC corpus.
The table indicates that 
the AC and NMI of ER-SPA are higher than those of SPA.
Table \ref{Tab: Topics for BBC} summarizes the words 
in the topics obtained by ER-SPA and SPA.
The topics were computed by using a low-rank approximation data matrix.
The table lists the anchor word and the 5 most frequent words in each topic
from left to right.
We computed the correspondence between topics obtained by ER-SPA and SPA
and grouped the topics for each algorithm.
Concretely, we measured the 2-norm of each topic vector
and computed the correspondence by solving an assignment problem.
We can see from the table that 
the topics obtained by these two algorithms are almost the same 
from the first to the third panel, 
and they seem to correspond to ``entertainment'', ``tech'' and ``business''.
The topics in the fourth and fifth panels, however, are different.
The topic in the fifth panel by ER-SPA 
seems to correspond to  ``politics''.
In contrast, it is difficult to find the topic 
corresponding to ``politics'' in the panels by SPA.
These show that ER-SPA could extract more recognizable topics than SPA.

\begin{remark}
 Sparsity plays an important role in computing the SVD for a large document corpus.
 In general, a document-word matrix arising from a text corpus is quite sparse.
 Our implementation of Algorithm \ref{Alg: Practical ER} 
 used the MATLAB command {\tt svds} that
 exploits  the sparsity of a matrix in the SVD computation.
 The implementation could work on all data of 20 Newsgroups corpus, 
 which formed a document-word matrix of size 18,846-by-26,213.
\end{remark}

\section{Concluding Remarks}
\label{Sec: Concluding}

We presented Algorithm \ref{Alg: ER} for Problem \ref{Prob: SepNMF}
and formally showed that it has correctness and robustness properties.
Numerical experiments on synthetic data sets demonstrated 
that Algorithm \ref{Alg: Practical ER}, 
which is the practical implementation of Algorithm \ref{Alg: ER},
is robustness to noise.
The robustness of the algorithm was measured in terms of the recovery rate.
The results indicated that Algorithm \ref{Alg: Practical ER} 
can improve the recovery rates of SPA and XRAY.
The algorithm was then applied to document clustering.
The experimental results implied 
that it outperformed SPA and extracted more recognizable topics.

We will conclude by suggesting a direction for future research.
Algorithm \ref{Alg: Practical ER} needs to do two computations:
one is the SVD of the data matrix and the other is the MVEE for
a set of reduced-dimensional data points.
It would be ideal to have a single computation 
that could be parallelized.
The MVEE computation requires that the convex hull of data points is full-dimensional.
Hence, the SVD computation should be carried out on data points.
However, if we could devise an alternative convex set for MVEE,
it would possible to avoid SVD computation.
It would be interesting to investigate 
the possibility of algorithms that 
find near-basis vectors by using the other type of convex set for data points.

\section*{Acknowledgments} 
The author would like to thank Akiko Takeda of the University of Tokyo
for her insightful and enthusiastic discussions, and 
thank the referees for careful reading and helpful suggestions
that considerably improved the presentation of this paper.

\bibliographystyle{plain}
\bibliography{M13}

\end{document}